\begin{document}

\fontsize{11}{13pt}\selectfont

%
%
%
%
%
%
%

\title{Reliable Clustering of Bernoulli Mixture Models}
\author{
Amir~Najafi
\thanks{E-mails: najafy@ce.sharif.edu~,~motahari@sharif.edu.}
\and
Abolfazl~Motahari
\footnotemark[1]
\and
Hamid~R.~Rabiee
\thanks{Email: rabiee@sharif.edu.}
}

\date{}
\maketitle

\vspace*{-2mm}
\begin{small}
\begin{center}
$^*$ Bioinformatics Research Laboratory (BRL),\\
$^\dagger$ Digital Media Laboratory (DML),\\
Department of Computer Engineering,\\
Sharif University of Technology, Tehran, Iran
\end{center}
\end{small}
\vspace*{6mm}

\newtheorem{thm}{Theorem}
\newtheorem{corl}{Corollary}
\newtheorem{note}{Note}
\newtheorem{lemma}{Lemma}
\newtheorem{definition}{Definition}

\begin{abstract}
A Bernoulli Mixture Model (BMM) is a finite mixture of random binary vectors with independent dimensions. The problem of clustering BMM data arises in a variety of real-world applications, ranging from population genetics to activity analysis in social networks. In this paper, we analyze the clusterability of BMMs from a theoretical perspective, when the number of clusters is unknown. In particular, we stipulate a set of conditions on the sample complexity and dimension of the model in order to guarantee the Probably Approximately Correct (PAC)-clusterability of a dataset. To the best of our knowledge, these findings are the first non-asymptotic bounds on the sample complexity of learning or clustering BMMs.
\end{abstract}

\section{Introduction}
\label{sec:intro}
Demixing data samples from mixture models, also called model-based clustering, has long been studied by statisticians and computer scientists. Although plenty of promising algorithms have been introduced in this area, see \cite{hollander2013nonparametric,bouveyron2014model,mcnicholas2016model,muller2015bayesian}, fewer efforts have been focused on deriving theoretical guarantees on reliable clustering of data samples. The aim of this paper is to elaborate on this shortcoming by deriving analytic guarantees on the clusterability of a particular case of interest: Bernoulli Mixture Models (BMM). 

A Bernoulli Model (BM) refers to a random binary vector $\boldsymbol{X}=\left[X_1,\ldots,X_L\right]\in\left\{0,1\right\}^L$ with independent random components, where $L$ denotes the model dimension and each $X_i$ is a Bernoulli random variable with success probability (or frequency) $p_i$, i.e. $X_i\sim\mathrm{Bern}\left(p_i\right)$. Let us define ${\boldsymbol{p}}:=\left[p_1,\ldots,p_L\right]\in\left[0,1\right]^{L}$. Then, $\mathbb{P}_{\mathrm{BM}}\left(\boldsymbol{X};{\boldsymbol{p}}\right)$ denotes the probability distribution of a Bernoulli model with frequency vector ${\boldsymbol{p}}$:
\begin{equation*}
\mathbb{P}_{\mathrm{BM}}\left(\boldsymbol{X};{\boldsymbol{p}}\right):=
\prod_{\ell=1}^{L}{p_{\ell}}^{X_{\ell}}\left(1-p_{\ell}\right)^{1-X_{\ell}}.
\end{equation*}
In this regard, a BMM is defined as a mixture of a finite number of Bernoulli models \cite{juan2004bernoulli}. Mathematically speaking, the probability distribution of a BMM can be expressed as
\begin{equation}
\mathbb{P}_{\mathrm{BMM}}\left(\boldsymbol{X};K,{\boldsymbol{p}}^{\left(1\right)},\ldots,{\boldsymbol{p}}^{\left(K\right)},\boldsymbol{w}\right)
=\sum_{k=1}^{K}w_k \mathbb{P}_{\mathrm{BM}}\left(\boldsymbol{X};{\boldsymbol{p}}^{\left(k\right)}\right),
\label{eq:mixture}
\end{equation}
where $K\in\mathbb{N}$ denotes the number of mixture components (or clusters), $\left\{{\boldsymbol{p}}^{\left(1\right)},\ldots,{\boldsymbol{p}}^{\left(K\right)}\right\}$ is the set of frequency vectors associated to mixture components, and $\boldsymbol{w}=\left(w_1,\ldots,w_K\right)$ is the mixture weight vector with $\sum_{k}w_k=1$, and $w_k\ge0$.
Let $\boldsymbol{P}$ be a $K\times L$ frequency matrix with ${\boldsymbol{p}}^{\left(k\right)}$ as its $k$th row.
For simplicity, we denote $\mathcal{B}=\mathcal{B}\left(K,\boldsymbol{P},\boldsymbol{w}\right)$ as the BMM with the above-mentioned parameters and specifications. Let $\boldsymbol{X}_1,\ldots,\boldsymbol{X}_n\in\left\{0,1\right\}^L$ be $n$ i.i.d. sample vectors drawn from $\mathcal{B}$. We define $\boldsymbol{\mathcal{X}}$ as the matrix $\left[\boldsymbol{X}_1\vert\ldots\vert \boldsymbol{X}_n\right]^T\in\left\{0,1\right\}^{n\times L}$ to represent a given dataset. 

The problem that we have tackled in this paper is the clustering of rows of $\boldsymbol{\mathcal{X}}$, such that clusters with probability at least $1-\zeta$ are approximately (up to an $\epsilon$ fraction of mis-clustering) correct, for arbitrarily small $\epsilon,\zeta>0$. In order to guarantee the information-theoretic possibility of such clustering, i.e. without considering the required computational cost, we establish novel bounds in the form of $n\ge\mathrm{poly}\left(\epsilon^{-1},\zeta^{-1}\right)$ and $L>\mathrm{poly}\left(\epsilon^{-1}\right)$, where $\mathrm{poly}$ refers to a polynomial function. It should be noted that $K$, $\boldsymbol{P}$ and $\boldsymbol{w}$ are all assumed to be unknown. Statisticians have been studying BMMs for a long time \cite{tiedeman1955study,wolfe1970pattern,baker1998distributional,biernacki1999improvement}. However, PAC-learnability (or PAC-clusterability)\footnote{In this paper, the notion of PAC-learnability is used in the information-theoretic sense, and to address those learning tasks that can be learned with a polynomial sample complexity w.r.t. $\epsilon$ and $\zeta$. This notation is consistent with that of \cite{mohri2012foundations}. For those cases where the learning algorithm has also a polynomial computational complexity, the term ``efficiently PAC-learnable" has been used.} of BMMs in terms of the minimum required sample size and/or model dimension has remained an open problem. To the best of our knowledge, this paper is the first attempt toward this goal by deriving a set of non-asymptotic conditions under which reliable clustering is possible. 

The paper is organized as follows: In Section \ref{sec:relatedWorks}, related works are discussed. Section \ref{sec:notation} formally presents our main results, where proofs and further discussions are given in Section \ref{sec:proof}.  Finally, conclusions are made in Section \ref{sec:conc}.

\section{Related Works}
\label{sec:relatedWorks}

Employment of BMMs in order to model multi-dimensional categorical data goes back to \cite{lazarsfeld1968latent}, while more detailed mathematical and historical explanations can be found, for example,  in \cite{bishop2006pattern,mclachlan2004finite,li2016conditional}. In two classic works \cite{biernacki1999improvement} and \cite{celeux1996entropy}, a series of heuristic measures have been introduced to assess the number of mixture components in a BMM; However, their performance is validated only through experimental investigations. Authors of these papers have {\it {conjectured}} that learnability is possible as long as independence holds between cluster parameters, while their studies lack a theoretical sufficiency analysis. From an algorithmic point of view, the Expectation-Maximization (EM) algorithm is the most widely used framework for statistical inference in BMMs (see \cite{li2016conditional} and \cite{palmer2016generalized}). In \cite{figueiredo2002unsupervised}, a popular EM-based technique for unsupervised learning of finite mixture models (including BMMs) is introduced, which makes no assumption on the number of mixture components. Also, see \cite{fraley2002model} for another well-cited paper on model-based clustering of mixture model data. In \cite{juan2004initialisation}, EM is employed for parameter initialization in a number of existing inference algorithms on BMMs. From a theoretical perspective, a set of statistical guarantees on the convergence of EM algorithm in mixture model problems has been recently given in \cite{balakrishnan2017statistical}, however, authors have mainly focused on Gaussian distributions rather than Bernoulli models. So far, theoretical analysis of Gaussian Mixture Models (GMM) have been more successful compared to many discrete mixture models \cite{diakonikolas2016learning,kalai2016disentangling}. This might be due to both the continuous nature, and also the more-favorable analytic form of Gaussian distributions \cite{chan2014efficient}. Recently, nearly tight lower and upper-bounds on the sample complexity of learning GMM distributions have been derived \cite{ashtiani2018nearly}.

Our work is also related to Bayesian non-parametric approaches in the sense that the number of clusters is open-ended, and will be inferred based on the observed data. Some good reviews on non-parametric approaches in statistics can be found, for example, in \cite{rousseau2016frequentist,orbanz2011bayesian,gershman2012tutorial}. In particular, \cite{teh2005sharing} has proposed a unified non-parametric framework for model-based clustering with the use of hierarchical Dirichlet mixtures. All of the studies on BMMs that we have been reviewed so far share a common property: at their best, they only prove convergence to a sub-optimal likelihood value, rather than providing guarantees on the accuracy of the final clustering/learning. Also, sample complexity lower-bounds, i.e. minimum required sample size $n$ or dimension $L$, for either reliable learning or clustering of BMMs is still an open problem.

From a geneticist's point of view, this paper basically builds upon the statistical model presented in \cite{pritchard2000inference}. Based on this model, the genetic sequence of an individual from a particular specie can be represented with a binary vector, where each dimension denotes the {\it {absence}} or {\it {presence}} of a certain genetic variant. In addition, for the majority of cases, dimensions can be assumed to be statistically independent from each other. According to \cite{pritchard2000inference}, one can model the effect of population inhomogeneity (the presence of population mixtures in a biological dataset) via BMMs.  More discussions on this issue can be found, for example, in \cite{visscher2012five}, \cite{pella2006gibbs} and \cite{kopelman2015clumpak}. In \cite{evanno2005detecting}, authors have performed a simulation study based on \cite{pritchard2000inference}, in order to assess the number of clusters in a given population. A number of software packages for computational population analysis can be found in \cite{kopelman2015clumpak,catchen2013stacks,peakall2006genalex,purcell2007plink}, which mainly focus on binary datasets, the same configuration that we have considered in this paper. Our problem setting encompasses both models described in \cite{pritchard2000inference} and \cite{falush2003inference}, since we make no restrictive assumptions, such as {\it {independence}}, on the latent frequencies. Recently, Genome-Wide Association Studies (GWAS) which involve associating human diseases to genetic variants have gained huge popularity. The role of population stratification in GWAS, an important application of BMMs in genetics research, is discussed in \cite{visscher2012five} and \cite{pritchard2000association}. For more research on the employment of BMMs in GWA studies, see \cite{yu2006unified,zhou2017fast,najafi2017statistical,price2006principal}.

A critical issue that needs to be discussed here is the following fundamental question: when is a BMM guaranteed to be identifiable? We call a BMM identifiable whenever there is a unique set of parameters $\left(K,\boldsymbol{P},\boldsymbol{w}\right)$ that corresponds to its probability measure. Reliable clustering of BMMs seems meaningless if there exist more than one true generative models for the samples. Identifiability of BMMs has been addressed in \cite{gyllenberg1994non}, where authors have shown that BMMs cannot be {\it {strictly}} identifiable regardless of their dimension, meaning that there always exist some sets of parameters which result in the same probability distribution. However, it does not mean that for every setting $\left(K,\boldsymbol{P},\boldsymbol{w}\right)$ there must exist another parameter set to produce the same model. Motivated by this idea, in \cite{carreira2000practical} authors have investigated practical identifiability of BMMs via computer simulations. In \cite{allman2009identifiability}, it has been proved that for $L\ge 2\left\lceil \log_2 K \right\rceil+1$, BMMs become {\it {generically}} identifiable meaning that sets of parameters with the same probability distributions have a zero Lebesgue measure in the space of parameters. Therefore, we can only focus on identifiable cases without any loss of generality for our results as long as the above condition holds. More precisely, the conditions that we stipulate in this paper to guarantee the clusterability of BMMs also satisfy the identifiability condition, since we guarantee a stronger property that encompasses identifiability.


\section{Main Result}
\label{sec:notation}

In this section, we state our main result and explain its implications. Recall that the problem is to reliably cluster a set of i.i.d. samples which are drawn from a BMM with unknown parameters (the number of clusters is also assumed to be unknown). The samples are embedded as rows of the matrix $\boldsymbol{\mathcal{X}}$. In fact, we design an algorithm which outputs a vector $\boldsymbol{Z}\in\left\{1,2,\ldots\right\}^n$ in which the $i$th element $Z_i$ represents the cluster index of the data sample $\boldsymbol{X}_i$. Ultimately, we compare the output of the algorithm with the true clustering which is denoted by  $\boldsymbol{Z}_T\in\left\{1,2,\ldots,K\right\}^n$.  In this regard, let us state two definitions in order to make the comparison mathematically concrete. 
\\[-3mm]
\begin{definition}[$\epsilon$-pureness]
A selected row sub-matrix of $\boldsymbol{\mathcal{X}}$ is $\epsilon$-pure if at least $1-\epsilon$ fraction of its rows have the same index in $\boldsymbol{Z}_T$.
\end{definition}
\vspace*{1mm}
\begin{definition}[$\epsilon$-correctness]
A clustering algorithm is $\epsilon$-correct on $\boldsymbol{\mathcal{X}}$ if all the output clusters are $\epsilon$-pure.
\end{definition}

Obviously, for a reliable clustering to be feasible, mixture components of the underlying BMM need to be sufficiently far apart from each other. For instance, if a BMM contains two mixture components with exactly the same frequency vectors, no algorithm can index the samples correctly. Therefore, we impose a natural restriction on the parameters of the BMM which makes the clustering a feasible task.
\\[-3mm]
\begin{definition}[ $\left(\mathcal{L},\delta\right)$-separability]
A frequency matrix $\boldsymbol{P}\in\left[0,1\right]^{K\times L}$ is said to be $\left(\mathcal{L},\delta\right)$-separable, if for each pair of rows of $\boldsymbol{P}$, say $k$ and $k'$ with $k\neq k'$, there exist at least $\mathcal{L}\leq L$ column indices $\left\{i_1,\ldots,i_{\mathcal{L}}\right\}\subseteq \left\{1,\ldots,L\right\}$ such that
\begin{equation*}
\left\vert P_{k,i_{\ell}}-P_{k',i_{\ell}} \right\vert \ge \delta,
\quad \ell=1,\ldots,\mathcal{L}.
\end{equation*}
\end{definition}
We may now present our main result in the form of the following theorem which provides a sufficient sample complexity for reliable clustering of BMMs.
\\[-3mm]
\begin{thm}[Non-asymptotic Bounds for Clusterability of Bernoulli Mixture Models]
\label{thm:Main}
Let $\mathcal{B}=\mathcal{B}\left(K,\boldsymbol{P},\boldsymbol{w}\right)$ be a BMM with unknown parameters $K$, $\boldsymbol{P}$ and $\boldsymbol{w}$. However, $\boldsymbol{P}$ is assumed to be $\left(\mathcal{L},\delta\right)$-separable for some $\mathcal{L}\leq L$ and $\delta>0$, and there exists $0<\alpha\leq 1$ such that $w_k\ge\alpha$ for all $k$. Parameters $\mathcal{L},\delta$ and $\alpha$ are assumed to be known. Also, we obviously have $K\leq\left\lceil 1/\alpha\right\rceil$. Let $\boldsymbol{\mathcal{X}}=\left[\boldsymbol{X}_1\vert \boldsymbol{X}_2\vert\ldots\vert \boldsymbol{X}_n\right]^T$ be a dataset including $n$ i.i.d. samples drawn from $\mathcal{B}$. Also, assume $\epsilon,\zeta>0$, such that
\begin{equation*}
\mathcal{L}\ge
\frac{B \log^{3}\left(1/\epsilon\right)}{\epsilon^{2+\frac{1-\alpha}{2\left(\alpha\delta\right)^2}}}
\quad\quad\mathrm{and}\quad\quad
n\ge
\frac{C\log^3\left(1/\epsilon\right)}{\epsilon^{2+\frac{1-\alpha}{2\left(\alpha\delta\right)^2}}}
\log\frac{L}{\zeta},
\end{equation*}
where $B$ and $C$ are constants w.r.t. $\epsilon$ and $\zeta$. Then, there exists a clustering algorithm $\mathscr{A}:\left\{0,1\right\}^{n\times L}\rightarrow\mathbb{N}^n$, such that $\mathscr{A}$ is $\epsilon$-correct on $\boldsymbol{\mathcal{X}}$ with probability at least $1-\zeta$.
\end{thm}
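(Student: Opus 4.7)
The plan is to produce the clustering in two phases: (i) define a sample-pair (or triple) similarity statistic whose conditional expectation measurably separates intra- from inter-cluster pairs, and (ii) run a greedy peeling procedure that iteratively strips off one cluster at a time, terminating after at most $\lceil 1/\alpha\rceil$ rounds since the weight lower bound $w_k\geq\alpha$ forces $K\leq \lceil 1/\alpha\rceil$.

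For the similarity, a natural starting point is the normalized Hamming distance $d_{ij}/L$, whose conditional mean $\mu_{c_i,c_j}$ satisfies the identity
\begin{equation*}
\mu_{k,k'} \;-\; \tfrac{1}{2}\bigl(\mu_{k,k}+\mu_{k',k'}\bigr) \;=\; \tfrac{1}{L}\sum_{\ell=1}^{L}\bigl(p^{(k)}_\ell - p^{(k')}_\ell\bigr)^2,
\end{equation*}
which by $(\mathcal{L},\delta)$-separability is at least $\mathcal{L}\delta^2/L$ whenever $k\neq k'$. To turn this into a clean one-sided test, I would either use a reference pair $(j,k)$ believed to be from a common cluster and apply the triple-statistic $T(i;j,k)=\tfrac{1}{L}\sum_\ell (X_{i\ell}-X_{j\ell})(X_{i\ell}-X_{k\ell})$, whose conditional expectation is at least $\mathcal{L}\delta^2/L$ larger when $c_j=c_k\neq c_i$ than when $c_i=c_j=c_k$, or compute a growing empirical-mean estimate $\hat p^{(k)}$ from a pure subset and threshold-test every remaining query against it. Hoeffding's inequality controls the empirical deviation of such a bounded statistic by $O(\sqrt{\log(1/\eta)/L})$, and a union bound over the polynomially many pairs/triples, together with the stated $\log(L/\zeta)$ factor in $n$, delivers simultaneous pairwise reliability. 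The greedy clustering then picks an arbitrary seed, collects every sample passing the threshold test, records this as a cluster, removes them, and recurses; because $w_k\geq\alpha$, every true cluster is detected within $\lceil 1/\alpha\rceil$ rounds.

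The principal obstacle is reproducing the exponent $2+(1-\alpha)/(2(\alpha\delta)^2)$ in the sample-size and dimension bounds, which is substantially larger than the $O(\log(1/\epsilon)/\delta^2)$ threshold that elementary Hoeffding arguments would yield. Matching it likely requires a Bernstein-type refinement: the additive ``$2$'' resembles a sub-Gaussian $1/\epsilon^2$ tail governing the fraction of misclassified samples, whereas the $(1-\alpha)/(2(\alpha\delta)^2)$ term is reminiscent of the large-deviation rate for distinguishing an $\alpha$-mass minority cluster from the $(1-\alpha)$-mass remainder across coordinates with per-coordinate gap $\delta$. This suggests that the threshold calibration must depend finely on $\alpha$ and $\delta$, and that the rare cluster is the bottleneck for $\epsilon$-correctness. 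The $\log^3(1/\epsilon)$ polylogarithmic factor should arise by stacking three union bounds---over peeling rounds, over reference-set selections, and over coordinate-wise concentration events---after which inverting the resulting Chernoff tails produces the stated closed-form lower bounds on $\mathcal{L}$ and $n$. The final $\epsilon$-correctness then follows by bounding the aggregate fraction of misclassified samples across all peeling rounds, with total failure probability at most $\zeta$.
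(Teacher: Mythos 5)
There is a genuine gap, and it is not the one you flagged (the exponent); it is the treatment of uninformative dimensions. Your similarity statistics are averaged over all $L$ coordinates, so the separation you correctly compute, $\tfrac{1}{L}\sum_{\ell}\bigl(p^{(k)}_\ell-p^{(k')}_\ell\bigr)^2\ge \mathcal{L}\delta^2/L$, must be compared against fluctuations of order $\sqrt{\log(1/\eta)/L}$ for a \emph{single} query sample (or pair/triple); note that averaging over more samples does not help here, because $\epsilon$-correctness forces the per-sample classification error in your peeling step to be at most about $\epsilon$, so each individual sample must be classified from its one binary observation per coordinate. The test therefore needs $\mathcal{L}\delta^2\gtrsim\sqrt{L\log(1/\eta)}$, a condition that is \emph{not} implied by the theorem: the hypotheses only require $\mathcal{L}\ge\mathrm{poly}(1/\epsilon)$ and $n\gtrsim\mathrm{poly}(1/\epsilon)\log(L/\zeta)$, and the regime the theorem is explicitly meant to cover (see the discussion following the proof) is one where $L$ grows with $\mathcal{L}$ fixed, so that extra noise dimensions cost only a $\log L$ factor in $n$. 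In that regime the signal-to-noise ratio $\mathcal{L}\delta^2/\sqrt{L}$ of your Hamming/triple statistics tends to zero and the greedy peeling misclassifies a constant fraction of samples; since $\boldsymbol{P}$ is unknown, you also cannot simply restrict the sum to the informative coordinates.

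The paper's proof is built precisely to dodge this: it computes an empirical total correlation (KL divergence between the empirical joint law and the product of empirical marginals) on $d$-column submatrices with $d=\frac{1-\alpha}{2(\alpha\delta)^2(1-\epsilon)}\bigl(1+\log\frac{1}{\alpha\epsilon}\bigr)=O(\log(1/\epsilon))$, and takes the maximum over all $\binom{L}{d}$ column subsets (the MTC). Because the per-subset concentration is exponential in $n$ (Lemmas \ref{lemma:Main} and \ref{lemma:errExponent2}), the union bound over column subsets costs only $\log\binom{L}{d}=O(d\log L)$, which is exactly where the $\log(L/\zeta)$ factor in $n$ comes from; the candidate clusterings are then checked by exhaustive search over at most $K^{n+1}$ labelings (not a greedy peel), with the $e^{O(n)}$ union bound beaten by an error exponent proportional to $n\mathcal{L}$ (Lemma \ref{thm:corl1-2}). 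Your reading of the exponent is also off: $2+\frac{1-\alpha}{2(\alpha\delta)^2}$ does not come from a Bernstein refinement but from the $2^{d}$-type factor in the deviation bound for the empirical total correlation of $d$-dimensional binary vectors, with $e^{d}=\Theta\bigl(\epsilon^{-(1-\alpha)/(2(\alpha\delta)^2)}\bigr)$ and threshold $\tau=\Theta\bigl(\epsilon\log(1/\epsilon)\bigr)$ supplying the remaining $\epsilon^{2}$ and the $\log^{3}(1/\epsilon)$ factors. To salvage your route you would need an additional mechanism for isolating a small set of informative coordinates (or an assumption tying $L$ to $\mathcal{L}$), which the theorem does not grant.
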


Proof of Theorem \ref{thm:Main} with the mathematical formulation of the constants $B$ and $C$ are given in Section \ref{sec:proof}. Theorem \ref{thm:Main} shows the feasibility of the reliable clustering of samples in $\boldsymbol{\mathcal{X}}$, such that clusters with probability at least $1-\zeta$ are $\epsilon$-pure, for arbitrarily small $\epsilon,\zeta>0$. On the other hand, the imposed conditions on sample size $n$ and the number of {\it {informative dimensions}} $\mathcal{L}$ are $n\ge\mathrm{poly}\left(\epsilon^{-1},\zeta^{-1}\right)$ and $\mathcal{L}>\mathrm{poly}\left(\epsilon^{-1}\right)$, respectively. 

We have already discussed that assuming a minimum deviation among frequency vectors, such as $\left(\mathcal{L},\delta\right)$-separability, is necessary for reliable clustering of data. Similarly, we also need to upper-bound the cluster number $K$, otherwise clustering becomes meaningless. For example, without any condition on $K$, one can always partition a dataset of size $n$ into $n$ distinct clusters where each clusters would be $0$-pure and the clustering is $\epsilon$-correct for any $\epsilon\ge0$.

Once an $\epsilon$-correct clustering is achieved for a dataset $\boldsymbol{\mathcal{X}}$, estimation of frequency matrix $\boldsymbol{P}$ and weight vector $\boldsymbol{w}$ becomes straightforward. In fact, by using Chernoff bound, it is easy to show that each entry of $\boldsymbol{P}$ and each component in $\boldsymbol{w}$ can be estimated with a maximum error of $\epsilon+O\left(n^{-1/2}\right)$ with high probability. However, as mentioned earlier, we only focus on clustering in this paper and thus do not explain estimation of $\boldsymbol{P}$ and $\boldsymbol{w}$ in more details to avoid any distraction.


\subsection{Algorithm}

The proof of theorem \ref{thm:Main} is based on an algorithm which employs a {\it {pureness check}} measure. We propose a new variant of Total Correlation measure (also known as multivariate correlation or multi-information \cite{watanabe1960information,studeny1998multiinformation}) to reliably test whether a given clustering of the dataset $\boldsymbol{\mathcal{X}}$ does include any non $\epsilon$-pure clusters or not. We call this new variant as Maximal Total Correlation (MTC). The core idea for employing such a measure is the following interesting property of BMMs: In a BMM, unlike a single Bernoulli model, different dimensions of the random binary vector are not statistically independent, and thus have positive Mutual Information (MI) w.r.t. each other \cite{li2016conditional}. Total correlation is a natural extension of mutual information which can handle more than two random variables \cite{watanabe1960information}.
\\[-3mm]
\begin{definition}[Total Correlation]
\label{def:DQ}
Assume $\boldsymbol{Q}\in\left\{0,1\right\}^{m\times d}$ to be a row/column sub-matrix of $\boldsymbol{\mathcal{X}}$ (with $m\leq n$ and $d\leq L$). Then, similar to \cite{watanabe1960information} and \cite{studeny1998multiinformation}, the empirical total correlation of $\boldsymbol{Q}$, denoted by $\mathcal{D}\left(\boldsymbol{Q}\right)$, is defined as
\begin{equation}
\mathcal{D}\left(\boldsymbol{Q}\right):=
\mathcal{D}_{\mathrm{KL}}\left(
\hat{\mathbb{P}}_{1,\boldsymbol{Q}}
\Vert
\hat{\mathbb{P}}_{2,\boldsymbol{Q}}
\right),
\end{equation}
where $\hat{\mathbb{P}}_{1,\boldsymbol{Q}}$ denotes the empirical probability distribution underlying the $d$-dimensional rows of $\boldsymbol{Q}$, while $\hat{\mathbb{P}}_{2,\boldsymbol{Q}}$ is defined as:
\begin{equation}
\hat{\mathbb{P}}_{2,\boldsymbol{Q}}\left(\boldsymbol{X}\right)
:=
\prod_{\ell=1}^{d}\hat{p}_{\ell}^{X_{\ell}}\left(1-\hat{p}_{\ell}\right)^{1-X_{\ell}},
\quad
\forall\boldsymbol{X}\in\left\{0,1\right\}^{d},
\end{equation}
 with $\hat{p}_\ell$ being the empirical frequency of the $\ell$th column of $\boldsymbol{Q}$, i.e.
\begin{equation*}
\hat{p}_{\ell}:= \frac{1}{n}\sum_{i=1}^{n}Q_{i,\ell}~,~\ell=1,2,\ldots,d.
\end{equation*}
\end{definition}

In fact, $\mathcal{D}\left(\boldsymbol{Q}\right)$ is the Kullback-Leibler divergence between two distributions obtained under two separate assumptions. Under the first assumption, no restriction is imposed on the origin of the samples and the empirical distribution  $\hat{\mathbb{P}}_{1,\boldsymbol{Q}}$ is simply an estimate of the true underlying distribution  of the rows of $\boldsymbol{Q}$. Under the second assumption, samples are drawn from a single Bernoulli model and $\hat{\mathbb{P}}_{2,\boldsymbol{Q}}$ can be used as another estimate of the true distribution. Using the language of {\it {types}} from information theory, $\hat{\mathbb{P}}_{2,\boldsymbol{Q}}$ is a product of the marginal types along each dimension. Therefore, if the second assumption does hold, then the two distributions become equal as $n$ goes to infinity. In other words, based on the law of large numbers, we have:
\begin{equation*}
\lim_{n\rightarrow\infty}\mathcal{D}\left(\boldsymbol{Q}\right)=\mathcal{D}_{\mathrm{KL}}\left(
\lim_{n\rightarrow\infty}\hat{\mathbb{P}}_{1,\boldsymbol{Q}}
\Vert
\lim_{n\rightarrow\infty}\hat{\mathbb{P}}_{2,\boldsymbol{Q}}
\right)=0.
\end{equation*}
On the other hand, if $\boldsymbol{Q}$ consists of samples from a BMM with a sufficient level of contributions from different mixture components, then $\lim_{n\rightarrow\infty}\mathcal{D}\left(\boldsymbol{Q}\right)$ is proved to be strictly positive (Lemmas \ref{lemma:Main} and \ref{thm:corl1-2}). Having defined $\mathcal{D}\left(\boldsymbol{Q}\right)$, one needs to move forward and check whether a subset of samples (a row sub-matrix of $\boldsymbol{\mathcal{X}}$) is $\epsilon$-pure or not. The Maximal Total Correlation (MTC) measure defined next is a tool to achieve this goal.  
\\[-3mm]
\begin{definition}[Maximal Total Correlation]
\label{def:Dmax}
The Maximal Total Correlation (MTC) of $\boldsymbol{Y}\in\left\{0,1\right\}^{m\times L}$, a row sub-matrix of $\boldsymbol{\mathcal{X}}$, for a sub-dimension $d\leq L$ is defined as
\begin{equation*}
\mathcal{D}_{\max}\left(\boldsymbol{Y};d\right)
:=
\max_{\boldsymbol{Q}\in \mathrm{Col}\left(\boldsymbol{Y};d\right)}
\mathcal{D}\left(\boldsymbol{Q}\right),
\end{equation*}
where maximization is taken over $\mathrm{Col}\left(\boldsymbol{Y};d\right)$ which consists of all $\binom{L}{d}$ column sub-matrices of $\boldsymbol{Y}$ with size $m\times d$.
\end{definition}

\begin{algorithm}[t]
\caption{:~~BMM clustering via Exhaustive Search}
\label{Algo:Main}
\begin{algorithmic}
\vspace{1mm}
\STATE Inputs: Dataset $\boldsymbol{\mathcal{X}}$, and parameters $\mathcal{L},\delta,\epsilon$ and $\alpha$,
\\[3mm]
\STATE Set $d\leftarrow\frac{1-\alpha}{2\left(\alpha\delta\right)^2\left(1-\epsilon\right)}\left(1+\log\frac{1}{\alpha\epsilon}\right)$
       \hfill (Sub-matrix column size) \hspace*{40mm}
\\[1mm]
\STATE Set $\tau\leftarrow \frac{\epsilon}{2}\left(1+\log\frac{1}{\alpha\epsilon}\right)$ 
       \hfill (Pureness test threshold) \hspace*{40.4mm}
\\
\STATE Set $\kappa\leftarrow 1$ 
       \hfill (Cluster number) \hspace*{53.5mm}
\\
\WHILE{$\kappa <\left\lceil \frac{1}{\alpha}\right\rceil$}
\FOR{$\forall\boldsymbol{Z} \in \left\{1,\ldots,\kappa\right\}^n$, where the size of each cluster is at least $\alpha n/2$}
    \vspace{1mm}
    \STATE Set $\boldsymbol{Y}_1\ldots,\boldsymbol{Y}_{\kappa}\leftarrow$ The clustered row sub-matrices of $\boldsymbol{\mathcal{X}}$ based on $\boldsymbol{Z}$.
    \vspace{1mm}
	\IF{$\mathcal{D}_{\max}\left(\boldsymbol{Y}_k;d\right)\leq \tau$ for $\forall k=1,\ldots,\kappa$}
	\STATE  Set $\boldsymbol{Z}^*\leftarrow\boldsymbol{Z}$, and
	\STATE Terminate the program.
	\ENDIF
\ENDFOR
\STATE Set $\kappa\leftarrow\kappa+1$
\ENDWHILE
\\[3mm]
\STATE Output: $\boldsymbol{Z}^*$, a clustering of data in $\boldsymbol{\mathcal{X}}$.
\end{algorithmic}
\end{algorithm}

The MTC measure is the main tool used in our proposed clustering strategy which is presented in Algorithm \ref{Algo:Main}. In fact, Algorithm \ref{Algo:Main} works by searching over all possible clusterings of the dataset $\boldsymbol{\mathcal{X}}$, which have the following two properties: (i) the number of clusters does not exceed $\left\lceil\frac{1}{\alpha}\right\rceil$, where $\alpha$ is a lower-bound on the probability of the smallest cluster in $\mathcal{B}$; and (ii) the smallest cluster has at least $\alpha n/2$ members. We start with a single cluster and then increase the number of clusters one by one. Given the conditions of Theorem \ref{thm:Main}, the true clustering $\boldsymbol{Z}_T$ would be in this search space with probability at least $1-\zeta/3$.

For any given clustering, we check to see whether all the corresponding clusters are $\epsilon$-pure or not. We can do this by evaluating the MTC over each clustered row sub-matrix of $\boldsymbol{\mathcal{X}}$. If $\mathcal{D}_{\max}$ has negligible values (smaller than a pre-defined threshold $\tau$) in all the clusters, then the current clustering is accepted and the program terminates. We show that under the constraints of Theorem \ref{thm:Main}, the probability of accepting a clustering with even one non $\epsilon$-pure cluster is less than $\zeta/3$. On the other hand, the algorithm eventually reaches the true clustering $\boldsymbol{Z}_T$ by assuming that we have not accepted any other candidates up to that point. Again, we show that the probability of rejecting the true clustering $\boldsymbol{Z}_T$ is no more than $\zeta/3$.

Therefore, one can deduce that with probability at least $1-\zeta$, Algorithm \ref{Algo:Main} either outputs an $\epsilon$-correct clustering on $\boldsymbol{\mathcal{X}}$ or the true clustering $\boldsymbol{Z}_T$ (which of course is $\epsilon$-correct as well). The key property in our analysis that has made it possible for Algorithm \ref{Algo:Main} to work is that our proposed MTC measure can detect the impurity of data clusters with a decision error which decays exponentially w.r.t. $n\times\mathcal{L}$. Even though total correlation has been extensively used in the literature, proving the above-mentioned property and using this measure for clustering discrete mixture model data is novel.


\subsection{Discussions}
Both MTC and its parent, i.e. total correlation, are very powerful in differentiating between pure and non-pure groups of samples. In this section, we elaborate on this fact. Assume a random vector $\boldsymbol{X}\in\left\{0,1\right\}^{L}$ with $\boldsymbol{X}\sim\mathcal{B}\left(K,\boldsymbol{P},\boldsymbol{w}\right)$. If we condition on $\boldsymbol{X}$ to be drawn from a particular mixture component of $\mathcal{B}$, say the $k$th one with $k\in\left\{1,2,\ldots,K\right\}$, then the probability distribution of $\boldsymbol{X}$ would be
\begin{equation}
\mathbb{P}\left(\boldsymbol{X}\vert k\right)=\prod_{\ell=1}^{L}\mathbb{P}\left(X_{\ell}\vert k\right).
\label{eq:singleBM}
\end{equation}
However, according to \eqref{eq:mixture}, the distribution of $\boldsymbol{X}$ without this assumption is $\mathbb{P}\left(\boldsymbol{X}\right)=\sum_{k}w_k\mathbb{P}\left(\boldsymbol{X}\vert k\right)$. A more subtle comparison of $\mathbb{P}\left(\boldsymbol{X}\right)$ and $\mathbb{P}\left(\boldsymbol{X}\vert k\right)$ simply reveals that in a mixture model, unlike the case of a single Bernoulli model, different dimensions of the vector are not necessarily independent from each other. This argument can be qualitatively justified as follows: a group of observed dimensions can convey information about the mixture component to which $\boldsymbol{X}$ belongs, which then impacts the distribution of any other group of dimensions. However, this statistical dependency vanishes when $\boldsymbol{X}$ is known to be generated from a single Bernoulli model.

Based on the above argument, we have used the total correlation measure (defined in Definition \ref{def:DQ}), in order to quantify whether a selected row-subset of samples in $\boldsymbol{\mathcal{X}}$ are more likely to be drawn from a single Bernoulli model, or a mixture of various Bernoulli models with different parameters. When $n$ is finite, our algorithm only considers a relatively small $m\times d$ sub-matrix of $\boldsymbol{\mathcal{X}}$. This is due to the fact that the number of data samples $m$ which are required to make a reliable assessment of the $\epsilon$-pureness of a sub-matrix grows exponentially with respect to $d$ (see Lemmas \ref{lemma:Main} and \ref{lemma:errExponent2} in the next section). This fact should not be surprising since reliable computation of total correlation is subject to having a relatively close estimation of a $d$-dimensional binary distribution. Hence large values of $d$ are unsuitable for estimating $\mathcal{D}\left(\cdot\right)$. On the other hand, by choosing a small $d$, we are ignoring a huge amount of valuable information in the dataset. To exploit all the information embedded in $\boldsymbol{\mathcal{X}}$, the MTC is introduced. It computes numerous total correlations over various subsets of dimensions, and aggregates all these values to form a more informative measure.

\section{Proof of Theorem \ref{thm:Main}}
\label{sec:proof}

We first start by some lemmas which indicate the goodness of our proposed {\it {purity check}} measures. The following lemma shows that $\mathcal{D}\left(\boldsymbol{Q}\right)$ deviates from zero with high probability whenever $\boldsymbol{Q}$ is generated by a BMM with $K\ge 2$ mixture components.
\\[-3mm]
\begin{lemma}
\label{lemma:Main}
Assume ${\mathcal{B}}$ to be a BMM with $K\ge2$ clusters, dimension $d$, frequency matrix $\boldsymbol{P}\in\left[0,1\right]^{K\times d}$ and cluster probability vector $\boldsymbol{w}=\left(w_1,\ldots,w_K\right)$. Let $\boldsymbol{P}$ be $\left(\mathcal{L},\delta\right)$-separable for some $\mathcal{L}\leq d$ and $\delta>0$. Also, assume there exists $\epsilon>0$ such that $w_k\leq 1-\epsilon,~\forall k=1,\ldots,K$. Consider $\boldsymbol{Q}_1,\ldots,\boldsymbol{Q}_n$ to be $n$ i.i.d. samples drawn from ${\mathcal{B}}$, and let $\boldsymbol{Q}=\left[\boldsymbol{Q}_1\vert\ldots\vert\boldsymbol{Q}_n\right]^T\in\left\{0,1\right\}^{n\times d}$. Then, if $\mathcal{L}>\frac{1+\log\frac{K}{\epsilon}}{\left(1-\epsilon\right)\delta^2}$, we have
\begin{equation*}
\mathbb{P}\left\{\mathcal{D}\left(\boldsymbol{Q}\right)\leq \tau\right\}\leq
2^{d+1}e^{-\beta n},
\end{equation*}
where $\tau:= \frac{\epsilon}{2}\left(1+\log\frac{K}{\epsilon}\right)$ and $\beta:= \frac{\tau^2}{d^4 2^{d+1}}$.
\end{lemma}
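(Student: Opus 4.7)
The plan is to split the argument into two parts: a deterministic lower bound on the population-level total correlation $D^* := \mathcal{D}_{\mathrm{KL}}(\mathbb{P}_{\mathcal{B}} \Vert \prod_{\ell=1}^{d} \mathbb{P}_{\mathcal{B}, \ell})$ of the mixture $\mathcal{B}$, and a concentration estimate showing that the empirical $\mathcal{D}(\boldsymbol{Q})$ stays within $\tau$ of $D^*$ with high probability. If one can demonstrate $D^* \geq 2\tau$ and $|\mathcal{D}(\boldsymbol{Q}) - D^*| \leq \tau$ with probability at least $1 - 2^{d+1} e^{-\beta n}$, then $\{\mathcal{D}(\boldsymbol{Q}) \leq \tau\}$ is a rare event.

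For the population bound, the key observation is that coordinates are conditionally independent given the latent cluster label $K$, so the identity $D^* = \sum_{\ell=1}^{d} I(X_\ell; K) - I(\boldsymbol{X}; K)$ holds with $I(\boldsymbol{X}; K) \leq H(K) \leq \log K$. I would apply Pinsker's inequality to each $I(X_\ell; K) = \sum_k w_k \mathcal{D}_{\mathrm{KL}}(\mathrm{Bern}(p_{k,\ell}) \Vert \mathrm{Bern}(\bar{p}_\ell))$, where $\bar{p}_\ell := \sum_k w_k p_{k,\ell}$, to obtain $I(X_\ell; K) \geq 2 \sum_{k<k'} w_k w_{k'} (p_{k,\ell} - p_{k',\ell})^2$. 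Summing over $\ell$ and swapping the order of summation, the $(\mathcal{L},\delta)$-separability guarantees $|\{\ell : |p_{k,\ell}-p_{k',\ell}| \geq \delta\}| \geq \mathcal{L}$ for every pair $(k,k')$. Combined with the elementary identity $\sum_{k<k'} w_k w_{k'} = \tfrac{1}{2}(1 - \sum_k w_k^2)$ and the bound $\sum_k w_k^2 \leq \max_k w_k \leq 1-\epsilon$, this yields $\sum_\ell I(X_\ell; K) \gtrsim \mathcal{L} \epsilon \delta^2$; the hypothesis $\mathcal{L}(1-\epsilon)\delta^2 > 1+\log(K/\epsilon)$ is exactly what is needed to absorb the $\log K$ subtraction and conclude $D^* \geq 2\tau$.

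For the concentration bound, observe that $\mathcal{D}(\boldsymbol{Q})$ is a deterministic function of the empirical joint type $\hat{\mathbb{P}}_{1,\boldsymbol{Q}}$ on $\{0,1\}^d$ since both the empirical marginals $\hat{p}_\ell$ and the product $\hat{\mathbb{P}}_{2,\boldsymbol{Q}}$ are linear projections of this joint type. I would apply Hoeffding's inequality cell-by-cell to obtain $\mathbb{P}\{|\hat{\mathbb{P}}_{1,\boldsymbol{Q}}(\boldsymbol{x}) - \mathbb{P}_{\mathcal{B}}(\boldsymbol{x})| \geq \eta\} \leq 2 e^{-2n\eta^2}$ for each $\boldsymbol{x} \in \{0,1\}^d$, then union-bound over the $2^d$ cells; this is precisely where the $2^{d+1}$ prefactor originates. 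A local Lipschitz bound on $(\hat{\mathbb{P}}_1, \hat{\mathbb{P}}_2) \mapsto \mathcal{D}_{\mathrm{KL}}(\hat{\mathbb{P}}_1 \Vert \hat{\mathbb{P}}_2)$ near the population limits then converts this $\ell_\infty$ deviation $\eta$ into a deviation of $\mathcal{D}(\boldsymbol{Q})$ of order $\mathrm{poly}(d) \cdot \eta$, and tuning $\eta \asymp \tau/d^2$ produces the stated exponent $\beta = \tau^2/(d^4 2^{d+1})$.

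The main obstacle I anticipate is controlling the Lipschitz constant of $\mathcal{D}_{\mathrm{KL}}$ in the concentration step: the derivative of KL is unbounded near the simplex boundary, so the analysis must absorb the factor $\log(1/\hat{p}(\boldsymbol{x}))$ into the polynomial factor $d^4$ via the crude bound $\log(1/\hat{p}(\boldsymbol{x})) = O(d)$ (since each joint cell probability is a $d$-fold product of terms in $[0,1]$), combined with another $O(d)$ factor arising from how a perturbation in $\hat{\mathbb{P}}_{1,\boldsymbol{Q}}$ propagates across all $d$ marginals to $\hat{\mathbb{P}}_{2,\boldsymbol{Q}}$. Care is required to verify that the Lipschitz estimate holds uniformly on the event $\{\|\hat{\mathbb{P}}_{1,\boldsymbol{Q}} - \mathbb{P}_{\mathcal{B}}\|_\infty \leq \eta\}$, which is the event controlled by the union bound.
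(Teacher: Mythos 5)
Your two-part skeleton (a population bound $D^\ast\ge 2\tau$ plus concentration of $\mathcal{D}(\boldsymbol{Q})$ around $D^\ast$) matches the paper's, and the identity $D^\ast=\sum_{\ell}I(X_\ell;K)-I(\boldsymbol{X};K)$ is a clean restatement of its first step. The gap is in the population bound: you decouple the two terms and take the worst case of each separately, and the decoupled bounds are incompatible with the stated hypothesis on $\mathcal{L}$. Your variance step gives $\sum_\ell I(X_\ell;K)\ge 2\mathcal{L}\delta^2\sum_{k<k'}w_kw_{k'}\ge\mathcal{L}\epsilon\delta^2$, while you bound the subtraction by $I(\boldsymbol{X};K)\le\log K$. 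Since the hypothesis $\mathcal{L}>\frac{1+\log(K/\epsilon)}{(1-\epsilon)\delta^2}$ may hold with near equality, this only yields $D^\ast\gtrsim\frac{\epsilon\left(1+\log\frac{K}{\epsilon}\right)}{1-\epsilon}-\log K$, and requiring this to exceed $2\tau=\epsilon\left(1+\log\frac{K}{\epsilon}\right)$ amounts to $\frac{\epsilon^2\left(1+\log(K/\epsilon)\right)}{1-\epsilon}\ge\log K$, which fails badly for small $\epsilon$ and any $K\ge2$ (take $K=2$, $\epsilon=10^{-2}$). The point is that under the sole assumption $w_k\le1-\epsilon$ the entropy $H(K)$ really can be as large as $\log K$ (uniform weights are admissible), so a worst-case bound on $I(\boldsymbol{X};K)$ cannot be paired with a worst-case (weight-independent) bound on the variance term; the two must be coupled. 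Writing $r:=1-\max_k w_k\in\left[\epsilon,1-1/K\right]$, one has simultaneously $\sum_{k<k'}w_kw_{k'}\ge r(1-r)$ (keep only the pairs containing the largest weight, together with separability summed over $\ell$ first) and $H(K)\le r\left(1+\log\frac{K}{r}\right)$, giving $D^\ast\ge 2r(1-r)\delta^2\left(\mathcal{L}-\frac{1+\log(K/r)}{2(1-r)\delta^2}\right)$, which is increasing in $r$ under the hypothesis and hence minimized at $r=\epsilon$; this is exactly the route of Lemma \ref{lemma:semiMainBound} and does deliver $D^\ast\ge2\tau$.

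The concentration step as sketched also does not deliver the stated $\beta=\tau^2/(d^42^{d+1})$. A uniform per-cell deviation $\eta$ over $2^d$ cells with a per-cell Lipschitz factor of order $d^2$ only gives $\left\vert\mathcal{D}(\boldsymbol{Q})-D^\ast\right\vert\lesssim d^2 2^d\eta$, forcing $\eta\asymp\tau/(d^22^{d})$ and an exponent of order $n\tau^2/(d^4 4^{d})$ — a loss of roughly $2^d$ relative to $\beta$, which matters downstream since $d=O(\log(1/\epsilon))$; conversely, your tuning $\eta\asymp\tau/d^2$ gives the right Hoeffding exponent but does not justify aggregating over the $2^d$ cells. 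Moreover the proposed patch $\log\left(1/\hat{p}(\boldsymbol{x})\right)=O(d)$ is false: empirical cell and marginal probabilities can be as small as $1/n$. The device in Lemma \ref{lemma:errExponent} that fixes both issues is to measure each cell's deviation in units of $\sigma_{\boldsymbol{X}}=\sqrt{\mathbb{P}_{\mathcal{B}}(\boldsymbol{X})\left(1-\mathbb{P}_{\mathcal{B}}(\boldsymbol{X})\right)}$ and to use a variance-normalized (Chernoff/KL, not plain Hoeffding) tail per cell: low-probability cells, where the logarithmic gradient blows up, are damped by $\sigma_{\boldsymbol{X}}$, and the aggregate weight $\sum_{\boldsymbol{X}}\left(\log\frac{1}{\mathbb{P}_{\mathcal{B}}(\boldsymbol{X})}+1\right)\sigma_{\boldsymbol{X}}\le d2^{d/2}$ is what produces the claimed $d^42^{d+1}$ in the exponent.
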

Proof of Lemma \ref{lemma:Main} is given in Appendix \ref{sec:auxLemmas}. The assumption of $w_k\leq 1-\epsilon$ for all  $k$ yields that when $n\rightarrow\infty$, the set of observations $\boldsymbol{Q}_1,\ldots,\boldsymbol{Q}_n$ would not be $\epsilon$-pure, almost surely. Hence, for an asymptotically large non $\epsilon$-pure set of observations, we have $\mathcal{D}\left(\boldsymbol{Q}\right)\stackrel{a.s.}{\ge}\tau$. On the other hand, we have already discussed that for a completely pure set, i.e. when samples are drawn from a single Bernoulli model ($K=1$), we have $\lim_{n\rightarrow\infty}\mathcal{D}\left(\boldsymbol{Q}\right)\stackrel{a.s.}{=}0$. Accordingly, the following lemma provides a concentration bound on $\mathcal{D}\left(\boldsymbol{Q}\right)$ when rows of $\boldsymbol{Q}$ are drawn from a single Bernoulli model.
\\[-3mm]
\begin{lemma}
\label{lemma:errExponent2}
Let ${\mathcal{B}}$ be a single Bernoulli model ($K=1$) with dimension $d$ and an arbitrary frequency vector. Consider $\boldsymbol{Q}_1,\ldots,\boldsymbol{Q}_n$ to be $n$ i.i.d. samples drawn from ${\mathcal{B}}$,  and let $\boldsymbol{Q}=\left[\boldsymbol{Q}_1\vert\ldots\vert\boldsymbol{Q}_n\right]^T$. Then, we have
\begin{equation*}
\mathbb{P}\left\{
\mathcal{D}\left(\boldsymbol{Q}\right)\ge \tau
\right\}\leq 2^{d+1}e^{-\beta d^2 n},
\end{equation*}
where $\tau$ and $\beta$ are the same as in Lemma \ref{lemma:Main}.
\end{lemma}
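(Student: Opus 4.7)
The plan is to follow the template of Lemma \ref{lemma:Main} but with the roles reversed: when $K=1$ the population total correlation is exactly zero, so the task reduces to showing that the empirical quantity $\mathcal{D}(\boldsymbol{Q})$ is a small statistical fluctuation with overwhelming probability. The starting point is the classical entropy decomposition
\[
\mathcal{D}(\boldsymbol{Q}) = \sum_{\ell=1}^d H_b(\hat p_\ell) - H(\hat{\mathbb{P}}_{1,\boldsymbol{Q}}),
\]
where $H_b$ is binary entropy and $H$ is Shannon entropy on $\{0,1\}^d$, combined with the observation that under $K=1$ the true joint is a product, so $H(\mathbb{P}) = \sum_\ell H_b(p_\ell)$ and therefore
\[
\mathcal{D}(\boldsymbol{Q}) = \sum_{\ell=1}^d \bigl[H_b(\hat p_\ell) - H_b(p_\ell)\bigr] + \bigl[H(\mathbb{P}) - H(\hat{\mathbb{P}}_{1,\boldsymbol{Q}})\bigr].
\]
Both brackets involve only deviations of empirical frequencies from their true values, so there is no ``signal'' term to overcome.

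Next I would apply Hoeffding's inequality to each of the $d$ marginal estimates $\hat p_\ell$ and the $2^d$ joint-atom estimates $\hat{\mathbb{P}}_1(x)$, and union-bound over all $2^d+d \le 2^{d+1}$ events to conclude that
\[
\max_\ell |\hat p_\ell - p_\ell| \;\vee\; \max_x |\hat{\mathbb{P}}_1(x) - \mathbb{P}(x)| \;\le\; \eta
\]
holds except on an event of probability at most $2^{d+1}e^{-2n\eta^2}$. A telescoping argument through the $d$ perturbed marginals shows $\max_x |\hat{\mathbb{P}}_2(x) - \mathbb{P}(x)| = O(d\eta)$ on the good event, so both $\hat{\mathbb{P}}_1$ and $\hat{\mathbb{P}}_2$ are $O(d\eta)$-close to the same distribution $\mathbb{P}$. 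A KL-continuity bound---chi-squared surrogate together with local Lipschitz estimates for $H_b$---then converts this into $\mathcal{D}(\boldsymbol{Q}) \lesssim d^2 \cdot 2^{d} \cdot \eta^2$. Calibrating $\eta \asymp \tau / (d^2 \cdot 2^{d/2})$ forces $\mathcal{D}(\boldsymbol{Q}) \le \tau$ on the good event, and the resulting Hoeffding rate $e^{-2n\eta^2}$ simplifies to $e^{-c n\tau^2 /(d^4 2^d)}$, which matches the claimed $e^{-\beta d^2 n}$ for $\beta = \tau^2/(d^4 2^{d+1})$ up to constants absorbable into the prefactor $2^{d+1}$.

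The main obstacle I anticipate is the KL-continuity step when some $\mathbb{P}(x)$ is extremely small, since the chi-squared surrogate $\sum_x (\hat{\mathbb{P}}_1(x)-\hat{\mathbb{P}}_2(x))^2/\hat{\mathbb{P}}_2(x)$ can blow up in that regime (and a naive Fannes--Audenaert bound introduces a parasitic $\log(1/\lVert \hat{\mathbb{P}}_1 - \mathbb{P}\rVert_1)$ factor that would worsen the rate). The standard remedy is a bulk/tail split over the $2^d$ atoms: on atoms with $\mathbb{P}(x) \ge 2d\eta$ we have $\hat{\mathbb{P}}_2(x) \ge \mathbb{P}(x)/2$ and the chi-squared bound is benign; on atoms with $\mathbb{P}(x) < 2d\eta$, both $\hat{\mathbb{P}}_1(x)$ and $\hat{\mathbb{P}}_2(x)$ are $O(d\eta)$, so the pointwise contribution $\hat{\mathbb{P}}_1(x)\log\bigl(\hat{\mathbb{P}}_1(x)/\hat{\mathbb{P}}_2(x)\bigr)$ is at most $O(d\eta\log(1/\eta))$ and the total tail contribution is absorbed into $\tau$ by the same calibration of $\eta$. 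The overall argument then follows the same ``concentration + continuity'' pattern as the proof of Lemma \ref{lemma:Main}, but with the expected value of $\mathcal{D}(\boldsymbol{Q})$ being zero rather than bounded below, which is exactly why the decay rate improves by a factor of $d^2$ in the exponent.
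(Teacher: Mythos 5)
Your setup (reduce to pure fluctuation since the population total correlation vanishes for $K=1$, then concentration plus a continuity estimate for the KL functional) is sound, but the pivotal continuity claim $\mathcal{D}(\boldsymbol{Q})\lesssim d^2 2^d\eta^2$ on the good event is not justified and is in fact false with only the information your good event provides. Uniform $\eta$-accuracy of the $2^d$ atoms and $d$ marginals only gives $\vert\hat{\mathbb{P}}_1(x)-\hat{\mathbb{P}}_2(x)\vert\le (d+1)\eta$ per atom, and one can realize this with a perturbation pattern that leaves all marginals untouched (so $\hat{\mathbb{P}}_2\approx\mathbb{P}$) while multiplying each atom of a near-uniform product model by $1\pm d\eta 2^d$; the resulting KL is of order $d^2 4^d\eta^2$, a full factor $2^d$ larger than you claim, and your own chi-squared bulk bound reflects this ($\eta^2\sum_x 1/\hat{\mathbb{P}}_2(x)\asymp d^2 4^d\eta^2$ at near-uniform, degrading to order $d2^d\eta$, i.e.\ linear in $\eta$, when many atoms sit near the bulk threshold). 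The tail step has a second gap: on the good event $\hat{\mathbb{P}}_2(x)$ admits no lower bound in terms of $\eta$ (empirical marginals can be arbitrarily small when the true $p_\ell$ are extreme), so the per-atom contribution is not $O(d\eta\log(1/\eta))$, and with up to $2^d$ tail atoms it is not absorbed by your calibration. Finally, even taking your numbers at face value, the exponent you reach, $2n\eta^2=n\tau^2/(d^4 2^{d-1})$, falls short of the claimed $\beta d^2 n=n\tau^2/(d^2 2^{d+1})$ by a factor of order $d^2$; that is not a constant and cannot be hidden in the prefactor $2^{d+1}$, so for $d\ge 3$ you would be proving a strictly weaker statement than the lemma.

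The paper's proof takes a different and shorter route that avoids both losses: it reuses the path-integral / mean-value argument of Lemma~\ref{lemma:errExponent}, observing that for a product distribution every partial derivative $\nabla_{\boldsymbol{X}}g$ equals $1-d$, hence $\vert\nabla_{\boldsymbol{X}}g\vert\le d$ with no $\log\left(1/\mathbb{P}_{\mathcal{B}}(\boldsymbol{X})\right)$ factor (this, not the vanishing of the mean, is where the extra $d^2$ in the exponent comes from), and it measures atomwise deviations in variance-normalized form, using per-atom Chernoff bounds $\vert\hat{\mathbb{P}}_1(\boldsymbol{X})-\mathbb{P}_{\mathcal{B}}(\boldsymbol{X})\vert\le\delta\sigma_{\boldsymbol{X}}$ together with $\sum_{\boldsymbol{X}}\sigma_{\boldsymbol{X}}\le 2^{d/2}$, which keeps the simplex-wide loss at $2^{d/2}$ rather than $2^d$ or $4^d$. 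If you want to salvage your decomposition, replace plain Hoeffding by these $\sigma_{\boldsymbol{X}}$-normalized bounds and use a first-order (constant-gradient) estimate rather than the chi-squared surrogate; as written, the argument has a genuine gap at the continuity step and does not recover the stated exponent.
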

The proof for Lemma \ref{lemma:errExponent2} is also given in Appendix \ref{sec:auxLemmas}. And finally, the following lemma shows that the error probability in detecting an improper clustering of samples, i.e. any clustering with at least one non $\epsilon$-pure cluster, drops exponentially with respect to $n\times\mathcal{L}$.
\\[-3mm]
\begin{lemma}
\label{thm:corl1-2}
Assume $\mathcal{B}$ to be a BMM with $K$ clusters,  dimension $L$, frequency matrix $\boldsymbol{P}$ and weight vector $\boldsymbol{w}=\left\{w_1,\ldots,w_K\right\}$. Consider $\boldsymbol{Y}_1,\ldots,\boldsymbol{Y}_n$ to be $n$ i.i.d. samples drawn from ${\mathcal{B}}$ and let $\boldsymbol{Y}=\left[\boldsymbol{Y}_1\vert\ldots\vert\boldsymbol{Y}_n\right]^T\in\left\{0,1\right\}^{n\times L}$. For $K\ge2$, assume $\boldsymbol{P}$ to be $\left(\mathcal{L},\delta\right)$-separable for some $\mathcal{L}\leq L$ and $\delta>0$. Also, assume there exists $\epsilon>0$ such that $w_k\leq 1-\epsilon,~\forall k$. Assume $\mathcal{L}>d:= \frac{K\left(K-1\right)}{2\left(1-\epsilon\right)\delta^2}\left(1+\log\frac{K}{\epsilon}\right)$ and let $\tau:= \frac{\epsilon}{2}\left(1+\log\frac{K}{\epsilon}\right)$. Then, 
\begin{equation*}
\mathbb{P}\left\{
\mathcal{D}_{\max}\left(\boldsymbol{Y};d\right)
\leq
\tau
\right\}
\leq
4^{\mathcal{L}}
\exp\left(
\frac{-\tau^2n\mathcal{L}}{d^52^{d+1}}
\right).
\end{equation*}
On the other hand, when $K=1$ we have:
\begin{equation*}
\mathbb{P}\left\{
\mathcal{D}_{\max}\left(\boldsymbol{Y};d\right)
\ge
\tau
\right\}
\leq
\binom{L}{d}
2^{d+1}
\exp\left(\frac{-\tau^2n}{d^22^{d+1}}\right).
\end{equation*}
\end{lemma}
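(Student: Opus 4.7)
The plan is to handle the $K=1$ and $K\geq 2$ halves of the statement separately. For $K=1$, the event $\mathcal{D}_{\max}(\boldsymbol{Y};d)\geq\tau$ forces at least one of the $\binom{L}{d}$ column sub-matrices of size $n\times d$ to have $\mathcal{D}(\boldsymbol{Q})\geq\tau$, so a union bound combined with Lemma~\ref{lemma:errExponent2} applied to each sub-matrix yields the stated bound verbatim.

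For $K\geq 2$ the plan is to exploit $(\mathcal{L},\delta)$-separability to exhibit many \emph{disjoint} column sub-matrices, each of which individually has nontrivial total correlation, and then to combine them. Writing $S_{k,k'}:=\{\ell:|P_{k,\ell}-P_{k',\ell}|\geq\delta\}$ for each pair and setting $\mathcal{L}_0:=\frac{1+\log(K/\epsilon)}{(1-\epsilon)\delta^2}$ (so that $d=\binom{K}{2}\mathcal{L}_0$), a greedy procedure produces $M:=\lfloor\mathcal{L}/d\rfloor$ pairwise disjoint subsets $T_1,\ldots,T_M$ of size $d$ with $|T_i\cap S_{k,k'}|\geq\mathcal{L}_0$ for every $i$ and every pair $(k,k')$: at stage $i$ I pull $\mathcal{L}_0$ fresh columns from each $S_{k,k'}\setminus\bigcup_{j<i}T_j$ (feasible because the residual set has size at least $\mathcal{L}-(M-1)d\geq d\geq\mathcal{L}_0$) and pad the union to exactly $d$ columns. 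Each projected frequency matrix $\boldsymbol{P}|_{T_i}$ is then $(\mathcal{L}_0,\delta)$-separable, and the projected BMM satisfies the hypotheses of Lemma~\ref{lemma:Main}.

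To combine the $M$ sub-matrices I condition on the latent cluster-assignment vector $Z=(Z_1,\ldots,Z_n)$. Since each row of $\boldsymbol{Y}$ is a product of conditionally-independent Bernoullis given its cluster label, the columns of $\boldsymbol{Y}$ are mutually independent given $Z$, and in particular the disjoint sub-matrices $\{\boldsymbol{Y}_{T_i}\}_{i=1}^{M}$ are conditionally independent. Hence
\begin{equation*}
\mathbb{P}\{\mathcal{D}_{\max}(\boldsymbol{Y};d)\leq\tau\}
\leq\mathbb{E}_Z\Bigl[\prod_{i=1}^{M}\mathbb{P}\bigl\{\mathcal{D}(\boldsymbol{Y}_{T_i})\leq\tau\bigm| Z\bigr\}\Bigr].
\end{equation*}
I split the support of $Z$ into the ``typical'' event $\{\hat w_k\leq 1-\epsilon/2\text{ for all }k\}$ (whose complement has Chernoff probability at most $Ke^{-\Omega(\epsilon^2 n)}$) and its complement. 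On the typical event, a conditional analogue of Lemma~\ref{lemma:Main} bounds each factor by $2^{d+1}e^{-\beta n}$ with $\beta=\tau^2/(d^4 2^{d+1})$. Multiplying the $M$ factors and using $(d+1)M\leq 2\mathcal{L}$ together with $M\geq\mathcal{L}/(2d)$ converts this into the claimed $4^{\mathcal{L}}\exp(-\tau^2 n\mathcal{L}/(d^5 2^{d+1}))$, with the atypical contribution absorbed into the dominant exponential.

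The step requiring the most care is the conditional analogue of Lemma~\ref{lemma:Main} invoked above: once $Z$ is frozen to a typical value, $\boldsymbol{Y}_{T_i}$ is no longer an i.i.d.\ BMM sample but a sample with row distributions dictated by $Z$. Fortunately the conditional expected row distribution equals the projected mixture $\sum_k \hat w_k\,\mathbb{P}_{\mathrm{BM}}(\cdot\,;\boldsymbol{p}^{(k)}_{T_i})$, which is still bounded away from any product form by $(\mathcal{L}_0,\delta)$-separability together with $\hat w_k\leq 1-\epsilon/2$; and the McDiarmid-style concentration of $\hat{\mathbb{P}}_{1,\boldsymbol{Y}_{T_i}}$ and $\hat{\mathbb{P}}_{2,\boldsymbol{Y}_{T_i}}$ around their means is oblivious to whether $Z$ is random or fixed. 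Thus the proof of Lemma~\ref{lemma:Main} transfers with only notational changes ($\boldsymbol{w}\mapsto\hat{\boldsymbol{w}}$), which is the most delicate part of the argument.
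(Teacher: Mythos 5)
Your $K=1$ argument (union bound over $\mathrm{Col}\left(\boldsymbol{Y};d\right)$ plus Lemma \ref{lemma:errExponent2}) is exactly the paper's, and your greedy construction of $M=\left\lfloor\mathcal{L}/d\right\rfloor$ disjoint column blocks, each $(\mathcal{L}_0,\delta)$-separable after projection with $\mathcal{L}_0=\frac{1+\log(K/\epsilon)}{(1-\epsilon)\delta^2}=2d/[K(K-1)]$, is also the paper's. Where you diverge is the combination step: the paper applies Lemma \ref{lemma:Main} to each block unconditionally (each block's rows are i.i.d.\ from the projected BMM with the true weights $\boldsymbol{w}$) and factors the joint probability by asserting the disjoint blocks are independent, whereas you condition on the latent labels $Z$, use conditional independence, and invoke a conditional analogue of Lemma \ref{lemma:Main} on a typical set of $Z$. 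Your instinct that the factorization needs justification is reasonable (marginally the blocks are coupled through the shared labels), but your patch, as written, has two genuine gaps.

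First, the conditional analogue of Lemma \ref{lemma:Main} does not transfer ``with only notational changes.'' Given $Z$ with $\hat{w}_k\leq 1-\epsilon/2$, the conditional limiting row distribution is the mixture with weights $\hat{\boldsymbol{w}}$, and the Lemma \ref{lemma:semiMainBound} computation now yields only
\begin{equation*}
2\cdot\tfrac{\epsilon}{2}\bigl(1-\tfrac{\epsilon}{2}\bigr)\delta^2
\Bigl(\mathcal{L}_0-\frac{1+\log\frac{2K}{\epsilon}}{2\bigl(1-\frac{\epsilon}{2}\bigr)\delta^2}\Bigr)
\;\approx\;\tau\quad(\text{not }2\tau),
\end{equation*}
because $\mathcal{L}_0$ was calibrated to impurity level $\epsilon$, not $\epsilon/2$. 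With the conditional limit only around $\tau$ (possibly below it, after the $\epsilon\log 2/2$ loss), there is no $\tau$-sized margin left for the finite-sample deviation, so the per-factor bound $2^{d+1}e^{-\tau^2 n/(d^4 2^{d+1})}$ you multiply does not follow; repairing it requires a much tighter typicality event (e.g.\ $\hat{w}_k\leq 1-\epsilon+\gamma$ with $\gamma$ a small fraction of $\epsilon$) and a rederivation of the constants, and you must also extend the concentration step (Lemma \ref{lemma:errExponent}) to rows that are conditionally independent but not identically distributed. Second, the atypical contribution $K e^{-\Omega(\gamma^2 n)}$ cannot be ``absorbed into the dominant exponential'': the claimed right-hand side decays in $n$ at rate $\tau^2\mathcal{L}/(d^5 2^{d+1})$, so whenever $\mathcal{L}$ is large this additive Chernoff term decays strictly slower and eventually exceeds the claimed bound; your route therefore establishes a weaker inequality with an extra additive term, which would then have to be tracked through the proof of Theorem \ref{thm:Main}. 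In short, the skeleton matches the paper, but the step you flag as most delicate is precisely where your argument is currently incomplete, and the paper itself sidesteps it by an unconditional independence claim rather than by the conditional analysis you attempt.
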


Based on Lemma \ref{thm:corl1-2}, for sufficiently large $n$ and $\mathcal{L}$, the probability of mis-detection between an $\epsilon$-pure subset of samples in the dataset $\boldsymbol{\mathcal{X}}$ and a non $\epsilon$-pure one is strictly bounded. In fact, Lemma \ref{thm:corl1-2} provides a mathematically rigor and reliable criterion to distinguish between a ``good" and ``bad" clustering of samples in a finite dataset.


The parameter $\alpha$ in Algorithm \ref{Algo:Main} is user-defined. For a BMM $\mathcal{B}=\mathcal{B}\left(K,\boldsymbol{P},\boldsymbol{w}\right)$, as long as we have $\min_{k}w_k\ge\alpha$, $K$ cannot not exceed $\left\lceil\frac{1}{\alpha}\right\rceil$. Given that the conditions in Theorem \ref{thm:Main} are satisfied, with probability at least $1-\zeta$ Algorithm \ref{Algo:Main} terminates before passing $\left\lceil\frac{1}{\alpha}\right\rceil$ clusters and as soon as it finds an $\epsilon$-correct clustering of $\boldsymbol{\mathcal{X}}$. Otherwise, the algorithm just outputs a null clustering. In the following, we use the results from Lemma \ref{thm:corl1-2} to prove Theorem \ref{thm:Main}, which is also the mathematical analysis of Algorithm \ref{Algo:Main}.

\begin{proof}[Proof of Theorem \ref{thm:Main}]
Algorithm \ref{Algo:Main} checks all possible cluster numbers $\kappa\leq\left\lceil\frac{1}{\alpha}\right\rceil$, starting from $\kappa=1$. Let us denote the number of clusterings that need to be checked before reaching the true latent clustering $\boldsymbol{Z}_T$ by $N$. Then, $N$ obviously satisfies the following inequality:
\begin{equation*}
N\leq 1^n +2^n + \ldots + K^n\leq K^{n+1}.
\end{equation*}
In this regard, one can consider the following error events during the execution of Algorithm \ref{Algo:Main}:
\begin{itemize}
\item 
$\mathcal{E}_1$: Accepting a non $\epsilon$-correct clustering of dataset $\boldsymbol{\mathcal{X}}$, before reaching the true clustering $\boldsymbol{Z}_T$. Recall that a non $\epsilon$-correct clustering denotes any clustering with at least one non $\epsilon$-pure cluster.
\item
$\mathcal{E}_2$: Eventually reaching to the true clustering $\boldsymbol{Z}_T$, and denying it.
\item
$\mathcal{E}_3$: The smallest true cluster in the dataset $\boldsymbol{\mathcal{X}}$ has less than $\alpha n/2$ members.
\end{itemize}
Obviously, probability of the algorithm failure, denoted by $P_E$, can be upper-bounded as
\begin{equation*}
P_E=
\mathbb{P}\left\{\mathcal{E}_1\cup\mathcal{E}_2\cup\mathcal{E}_3\right\}\leq
\mathbb{P}\left\{\mathcal{E}_1\right\}+\mathbb{P}\left\{\mathcal{E}_2\right\}+\mathbb{P}\left\{\mathcal{E}_3\right\}.
\end{equation*}
In the following, we show that given the conditions of Theorem \ref{thm:Main}, we have $\mathbb{P}\left\{\mathcal{E}_i\right\}\leq\zeta/3$ for $i=1,2,3$.

From Lemma \ref{thm:corl1-2}, we know that the probabilities of accepting a non $\epsilon$-pure clustering, and rejecting the correct one are both bounded and decrease exponentially w.r.t. $n$. In the following, we compute the corresponding error exponents for the particular parameter setting of Algorithm \ref{Algo:Main}. Recall sub-dimension $d$ and threshold $\tau$ as
\begin{equation}
d:=
\frac{1-\alpha}{2\left(\alpha\delta\right)^2\left(1-\epsilon\right)}
\left(1+\log\frac{1}{\alpha\epsilon}\right)=O\left(\log\frac{1}{\epsilon}\right),
~~~\mathrm{and}~~~
\tau:=
\frac{\epsilon}{2}\left(1+\log\frac{1}{\alpha\epsilon}\right)=
O\left(\epsilon\log\frac{1}{\epsilon}\right).
\label{eq:order:tau-d}
\end{equation}
As it becomes evident in the proceeding parts of the proof, we also need to compute the order of $e^d$ w.r.t. $\epsilon$. In this regard, one can write:
\begin{equation*}
e^d = \left(e/\alpha\right)^{\frac{1-\alpha}{2\left(\alpha\delta\right)^2\left(1-\epsilon\right)}}
\cdot
\left(1/\epsilon\right)^{\frac{1-\alpha}{2\alpha^2\delta^2\left(1-\epsilon\right)}}.
\end{equation*}
The first term is $O\left(1\right)$ with respect to $\epsilon$, when $\epsilon\rightarrow0$. Therefore, we have
\begin{equation}
e^d = O\left(
\left(1/\epsilon\right)^{\frac{1-\alpha}{2\left(\alpha\delta\right)^2}}
\right).
\label{eq:order:ed}
\end{equation}
By using the union bound over all non $\epsilon$-pure clusterings in the first $N$ steps of the algorithm, we have the following inequality:
\begin{align}
\mathbb{P}\left\{\mathcal{E}_1\right\}\leq N\mathbb{P}\left\{\mathcal{E}^{\left(1\right)}_1\right\},
\label{eq:algoBound1prim}
\end{align}
where $\mathcal{E}^{\left(1\right)}_1$ represents the error event corresponding to the acceptance of a  single non $\epsilon$-pure clustering. In \eqref{eq:algoBound1prim}, the factor $N$ is an upper bound on the number of non $\epsilon$-correct clusterings that need to be checked by Algorithm \ref{Algo:Main} before reaching the true clustering $\boldsymbol{Z}_T$. Moreover, it should be noted that for a non $\epsilon$-correct clustering $\boldsymbol{Z}\in\left\{1,\ldots,\kappa\right\}^n$, at least one of the clusters is not $\epsilon$-pure, and thus we need our MTC measure to detect at least one of such clusters. 

Remember that all clusters are assumed to have at least $\alpha n/2$ samples, and also $N\leq K^{n+1}$. This way, by using Lemma \ref{thm:corl1-2} the following bound on the probability of error event $\mathcal{E}_1$ can be attained:
\begin{align}
\mathbb{P}\left\{\mathcal{E}_1\right\}
&\leq
K^{n+1}\cdot4^{\mathcal{L}}\cdot
\exp\left(\frac{-\alpha{\tau}^2 n\mathcal{L}}{d^5 2^{d+2}}\right)
\nonumber\\
&=\exp\left(
O\left(n\right)+O\left(\mathcal{L}\right)-
n\mathcal{L}
\cdot
\frac{\alpha\tau^2}{d^5 2^{d+2}}
\right)
\nonumber\\
&\leq\exp\left(
O\left(n\right)+O\left(\mathcal{L}\right)-
n\mathcal{L}
\cdot
\frac{\alpha\tau^2}{d^5 e^{d+2}}
\right)
\nonumber\\
&=\exp\left(
O\left(n\right)+O\left(\mathcal{L}\right)-
n\mathcal{L}
\cdot
O\left(
\frac{\epsilon^{2+
\frac{1-\alpha}{2\left(\alpha\delta\right)^2}}}
{\log^{3}\left({1}/{\epsilon}\right)}
\right)
\right),
\label{eq:algoBoundMain}
\end{align}
where we have used the results of equations 
\eqref{eq:order:tau-d} and \eqref{eq:order:ed}. The dominant exponent in the r.h.s. of \eqref{eq:algoBoundMain} corresponds to the $n\times\mathcal{L}$ term. In other words, by choosing sufficiently large $n$ and $\mathcal{L}$, one can make $\mathbb{P}\left\{\mathcal{E}_1\right\}$ arbitrarily small, even though the union bound is over $K^{n+1}$ events. Mathematically speaking, it can be confirmed that by choosing
\begin{equation}
\mathcal{L} \ge \frac{B\log^{3}\left(1/\epsilon\right)}
{\epsilon^{2+\frac{1-\alpha}{2\left(\alpha\delta\right)^2}}}
\quad,\quad
n\ge n^{\left(1\right)}_{\min}:= C^{\left(1\right)}\left(
\frac{\log^{3}\left(1/\epsilon\right)}
{\epsilon^{2+\frac{1-\alpha}{2\left(\alpha\delta\right)^2}}}
+\log\frac{1}{\zeta}
\right),
\label{eq:firstLbound}
\end{equation}
we can achieve $\mathbb{P}\left\{\mathcal{E}_1\right\}\leq\zeta/3$ for arbitrary small $\epsilon,\zeta>0$, where coefficients $B$ and $C^{\left(1\right)}$ do not depend on $\epsilon$ or $\zeta$.

A similar argument can be used to  obtain an upper-bound on $\mathbb{P}\left\{\mathcal{E}_2\right\}$. It should be noted that for $\mathcal{E}_2$ to occur, at least one of the true clusters in $\boldsymbol{Z}_T$ must have $\mathcal{D}_{\max}>\tau$. Since the number of clusters at that step of the algorithm is $K$, one can use the union bound over all $K$ clusters each of which has at least $\alpha n/2$ members. Also, we aim to use the following inequality:
\begin{equation*}
\log\binom{L}{d}\leq d\log\frac{Le}{d}.
\end{equation*}
In this regard, by using the second inequality in Lemma \ref{thm:corl1-2}, it can be shown that
\begin{align}
\mathbb{P}\left\{\mathcal{E}_2\right\}\leq
K\binom{L}{d}2^{d+1}\exp\left(
\frac{-\alpha \tau^2n}{d^2 2^{d+2}}
\right)
=\exp\left(O\left(\log\frac{1}{\epsilon}\left[1+\log\frac{L}{\log\left(1/\epsilon\right)}\right]\right)
-nO\left(\epsilon^{2+\frac{1-\alpha}{2\left(\alpha\delta\right)^2}}\right)
\right).
\label{eq:algoBound2}
\end{align}
Thus, by choosing
\begin{equation}
n\ge n^{\left(2\right)}_{\min}:=
C^{\left(2\right)}\left(
\frac{\log\left(1/\epsilon\right)\log L}
{\epsilon^{2+\frac{1-\alpha}{2\left(\alpha\delta\right)^2}}}
+\log\frac{1}{\zeta}
\right),
\end{equation}
we have $\mathbb{P}\left\{\mathcal{E}_2\right\}\leq\zeta/3$, where $C^{\left(2\right)}$ is a constant that does not depend on $\epsilon$ or $\zeta$. Finally, according to Lemma \ref{lemma:minClusterMember}, by choosing $n\ge n^{\left(3\right)}_{\min}$ which is defined as
\begin{equation*}
n^{\left(3\right)}_{\min}
:= C^{\left(3\right)}\log\frac{1}{\zeta},
\end{equation*}
one can guarantee that the probability of occurring $\mathcal{E}_3$ is less that $\zeta/3$, where again constant $C^{\left(3\right)}$ is independent of $\epsilon$ or $\zeta$. Therefore, assuming that $\mathcal{L}$ satisfies the inequality in \eqref{eq:firstLbound} and the sample size $n$ satisfies
\begin{equation*}
n\ge
\frac{C\log^3\left(1/\epsilon\right)}{\epsilon^{2+\frac{1-\alpha}{2\left(\alpha\delta\right)^2}}}
\log\frac{L}{\zeta}
\ge \max\left\{n^{\left(1\right)}_{\min},n^{\left(2\right)}_{\min},n^{\left(3\right)}_{\min}\right\}
\end{equation*}
for some constant $C$, Algorithm \ref{Algo:Main} is guaranteed to output an $\epsilon$-correct clustering on dataset $\boldsymbol{\mathcal{X}}$ with probability at least $1-\zeta$. This completes the proof.
\end{proof}

Let us consider an asymptotic regime where dimension $L$ is being increased while the number of informative dimensions $\mathcal{L}$ is kept fixed. Then, according to Theorem \ref{thm:Main}, the minimum required dataset size $n$ should grow logarithmically w.r.t. $L$. This analytic observation makes sense since in finite $n$ regimes, addition of more {\it {non-informative}} dimensions, i.e. those dimensions that have the same frequency values between all or at least some of the clusters, only adds extra noise to the dataset $\boldsymbol{\mathcal{X}}$ and thus makes the clustering a more challenging task.

\section{Conclusions}
\label{sec:conc}

This paper aims to find the first sample complexity bounds on reliable clustering of Bernoulli Mixture Models, when the number of clusters is unknown. To this aim, we propose a novel variant of an existing measure in statistics, denoted by Maximal Total Correlation (MTC), and show it has interesting concentration properties. Based on this measure, we propose an algorithm that is capable of clustering the data with a maximum mis-clustering rate of $\epsilon$ with probability at least $1-\zeta$ (for any $\epsilon,\zeta>0$), as long as sample complexity $n$ and the number of informative dimensions $\mathcal{L}$ grow polynomially w.r.t. $\epsilon$ and $\zeta$. No restrictive assumptions have been made in our model, except those that are required for the meaningfulness of clustering, such as: existence of a non-zero difference among frequency vectors of different mixture components, and a minimum weight for each cluster in the model. As a result, our findings encapsulate many classes of BMM inference problems.

Our focus in this paper is not on the computational efficiency. As a result, the proposed Algorithm \ref{Algo:Main} is exponential-time w.r.t. $n$, which means efficient PAC-learnability of BMMs still remains an an open problem. In general, the existence of an {\it {efficient}} (polynomial-time) algorithm for density estimation, clustering or parameter identification of many mixture models is unknown \cite{ashtiani2018nearly}, including Gaussian mixture models. Therefore, obtaining an efficient method for clustering of BMMs is both theoretically and practically important. On the other hand, we are not aware of any sample complexity lower bounds for clustering or learning of BMMs. Therefore, it is not clear whether the upper bounds in this paper are tight or not. Deriving lower bounds for Theorem \ref{thm:Main} is also a good direction for future works in this area.

\bibliographystyle{IEEEtran}
\bibliography{IEEEabrv,ref}

\appendix
\numberwithin{equation}{section}
\numberwithin{thm}{section}
\numberwithin{lemma}{section}
\numberwithin{definition}{section}
\numberwithin{corl}{section}
\numberwithin{figure}{section}

\section{Auxiliary Lemmas and Proofs}
\label{sec:auxLemmas}

\begin{proof}[Proof of Lemma \ref{lemma:Main}]
Proof consists of two parts.
First, we show $\mathcal{D}\left(\boldsymbol{Q}\right)$ is almost surely greater than the positive threshold $2\tau$ in the asymptotic case, i.e.
\begin{equation}
\mathbb{P}\left\{\lim_{n\rightarrow\infty}\mathcal{D}\left(\boldsymbol{Q}\right) \leq 2\tau\right\}=0.
\end{equation} 
Second, we prove that the probability of $\left\vert \mathcal{D}\left(\boldsymbol{Q}\right) - \lim_{n\rightarrow\infty}\mathcal{D}\left(\boldsymbol{Q}\right)\right\vert>\tau$ decays exponentially w.r.t. $n$, which complete the proof. 

For the sake of simplicity in this proof, let $\mathbb{P}_{{\boldsymbol{p}}}$ represent the probability distribution of a Bernoulli model with frequency vector ${\boldsymbol{p}}\in\left[0,1\right]^{d}$. This way, one can write
\begin{equation}
\lim_{n\rightarrow\infty}\mathcal{D}\left(\boldsymbol{Q}\right)
\stackrel{a.s.}{=}
\mathcal{D}_{\mathrm{KL}}\left(
\sum_{k}w_k\mathbb{P}_{{\boldsymbol{p}}^{\left(k\right)}}
\bigg\Vert
\mathbb{P}_{\Bar{\boldsymbol{p}}}
\right),
\label{eq:limitDMeasure}
\end{equation}
where ${\boldsymbol{p}}^{\left(k\right)}$ denotes the $k$th row of frequency matrix $\boldsymbol{P}$, and $\Bar{\boldsymbol{p}}:= \sum_{k}w_k{\boldsymbol{p}}^{\left(k\right)}$. Here, $\sum_{k}w_k\mathbb{P}_{{\boldsymbol{p}}^{\left(k\right)}}$ indicates a mixture of Bernoulli models (a BMM), while $\mathbb{P}_{\Bar{\boldsymbol{p}}}$ denotes a single Bernoulli model whose frequency vector is the weighted average of the $K$ frequency vectors in $\boldsymbol{P}$.

It can be verified that when only one component of $\boldsymbol{w}$ is $1$ and the rest are $0$, the two probability distributions $\sum_{k}w_k\mathbb{P}_{{\boldsymbol{p}}^{\left(k\right)}}$ and $\mathbb{P}_{\Bar{\boldsymbol{p}}}$ are equal and $\lim_{n\rightarrow\infty}\mathcal{D}\left(\boldsymbol{Q}\right)\stackrel{a.s.}{=}0$. However, if for some $\epsilon>0$ we have $w_k\leq 1-\epsilon,~\forall k$, and $\mathcal{L}$ is {\it {sufficiently}} large, then we prove that $\sum_{k}w_k\mathbb{P}_{{\boldsymbol{p}}^{\left(k\right)}}$ cannot be consistently approximated by a single Bernoulli model with frequency vector $\sum_{k}w_k{\boldsymbol{p}}^{\left(k\right)}$.

Based on the definition of the Kullback-Liebler divergence, r.h.s. of \eqref{eq:limitDMeasure} can be expanded as follows which helps us to find a proper lower-bound for $\lim_{n\rightarrow\infty}\mathcal{D}\left(\boldsymbol{Q}\right)$:
\begin{align}
\mathcal{D}_{\mathrm{KL}}\left(
\sum_{k}w_k\mathbb{P}_{{\boldsymbol{p}}^{\left(k\right)}}
\bigg\Vert
\mathbb{P}_{\Bar{\boldsymbol{p}}}
\right)
&=
\sum_{\boldsymbol{X}\in\left\{0,1\right\}^d}\sum_{k}w_k\mathbb{P}_{{\boldsymbol{p}}^{\left(k\right)}}\left(\boldsymbol{X}\right)
\log\left(
\frac{\sum_{u}w_u\mathbb{P}_{{\boldsymbol{p}}^{\left(u\right)}}\left(\boldsymbol{X}\right)}
{\mathbb{P}_{\Bar{\boldsymbol{p}}}\left(\boldsymbol{X}\right)}
\right)
\nonumber\\
&
\ge
\sum_{\boldsymbol{X}\in\left\{0,1\right\}^d}\sum_{k}w_k\mathbb{P}_{{\boldsymbol{p}}^{\left(k\right)}}\left(\boldsymbol{X}\right)
\log\left(
\frac{w_k\mathbb{P}_{{\boldsymbol{p}}^{\left(k\right)}}\left(\boldsymbol{X}\right)}
{\mathbb{P}_{\Bar{\boldsymbol{p}}\left(\boldsymbol{X}\right)}}
\right)
\nonumber\\
&
=
\sum_{k}w_k\sum_{\ell=1}^{d}\left(
\sum_{X_{\ell}\in\left\{0,1\right\}}
\mathbb{P}_{p^{\left(k\right)}_\ell}\left(X_{\ell}\right)
\log\left(
\frac{\mathbb{P}_{p^{\left(k\right)}_\ell}\left(X_{\ell}\right)}
{\mathbb{P}_{\Bar{p}_{\ell}}\left(X_{\ell}\right)}
\right)
\right)
-\mathbb{H}\left(\boldsymbol{w}\right)
\nonumber\\
&
=
\sum_{\ell=1}^{d}\sum_{k=1}^{K}
w_k\mathcal{D}_{\mathrm{KL}}\left(
\mathbb{P}_{p^{\left(k\right)}_\ell}
\big\Vert
\mathbb{P}_{\Bar{p}_\ell}
\right)
-\mathbb{H}\left(\boldsymbol{w}\right),
\label{eq:lemma1MainLB}
\end{align}
where $\mathbb{H}\left(\boldsymbol{w}\right):= -\sum_{k}w_k\log w_k$ denotes the Shannon entropy of the discrete distribution $\boldsymbol{w}$. Moreover, it is easy show that
\begin{equation}
\sum_{k=1}^{K}w_k\mathcal{D}_{\mathrm{KL}}\left(
\mathbb{P}_{p^{\left(k\right)}_\ell}
\big\Vert
\mathbb{P}_{\Bar{p}_\ell}
\right)
=
H\left(\sum_{k=1}^{K}w_kp^{\left(k\right)}_{\ell}\right)
-\sum_{k=1}^{K}w_k H\left(p^{\left(k\right)}_{\ell}\right),
\label{eq:RevAskedToNumber}
\end{equation}
where, for the simplicity of notation, $H\left(p\right)$ for $0\leq p\leq1$ refers to $H\left(p\right):= \mathbb{H}\left(\mathrm{Bern}\left(p\right)\right)=-p\log p - \left(1-p\right)\log\left(1-p\right)$. Since $H\left(\cdot\right)$ is a strictly concave function, and considering the fact that $\mathbb{H}\left(\boldsymbol{w}\right)$ is always upper-bounded by $\log K$ regardless of $\mathcal{L}$, one can conclude that the lower-bound for $\lim_{n\rightarrow\infty}\mathcal{D}\left(\boldsymbol{Q}\right)$ in \eqref{eq:lemma1MainLB} becomes strictly positive when i) $\mathcal{L}$ is sufficiently large, and ii) frequency vectors $\boldsymbol{p}^{\left(k\right)}$ for $k=1,\ldots,K$ are sufficiently far apart from each other. 

In order to simplify the lower-bound in \eqref{eq:lemma1MainLB}, let us assume a random variable $\boldsymbol{A}\in\left[0,1\right]$, and define $\boldsymbol{a}:=\boldsymbol{A}-\mathbb{E}\boldsymbol{A}$. According to Taylor's theorem \cite{courant2011differential}, one can write
\begin{align}
H\left(\mathbb{E}\boldsymbol{A}\right)-\mathbb{E}H\left(\boldsymbol{A}\right)&=
H\left(\mathbb{E}\boldsymbol{A}\right)-\mathbb{E}\left\{
H\left(\mathbb{E}\boldsymbol{A}\right) + H'\left(\mathbb{E}\boldsymbol{A}\right)\boldsymbol{a}+
\frac{1}{2}H''\left(\mathbb{E}\boldsymbol{A}+\xi\right)\boldsymbol{a}^2
\right\}
\nonumber\\
&\ge
\frac{\mathbb{E}\boldsymbol{a}^2}{2}\inf_{0\leq p\leq 1}\left\vert H''\left(p\right)\right\vert
=
\inf_{0\leq p\leq1}
\frac{\mathbb{E}\boldsymbol{a}^2}{2p\left(1-p\right)}
= 2\mathrm{var}\left(\boldsymbol{A}\right),
\label{eq:ARVineq}
\end{align}
where $\xi$ is a random variable depending on $\boldsymbol{A}$, and $\mathrm{var}\left(\boldsymbol{A}\right)=\mathbb{E}\boldsymbol{a}^2$ denotes the variance of $\boldsymbol{A}$. Now, for $\ell=1,\ldots,d$, let us define $\boldsymbol{A}_{\ell}$ as a random variable that takes the values $p^{\left(1\right)}_{\ell},\ldots,p^{\left(K\right)}_{\ell}$ with probabilities $w_1,\ldots,w_K$, respectively. Using the inequality in \eqref{eq:ARVineq}, the lower-bound for $\lim_{n\rightarrow\infty}\mathcal{D}\left(\boldsymbol{Q}\right)$ can be written as
\begin{align}
\lim_{n\rightarrow\infty}\mathcal{D}\left(\boldsymbol{Q}\right)
&\stackrel{a.s.}{\ge}
\sum_{\ell=1}^{d}\left[H\left(\mathbb{E}\boldsymbol{A}_{\ell}\right)
-\mathbb{E}H\left(\boldsymbol{A}_{\ell}\right)\right]-
\mathbb{H}\left(\boldsymbol{w}\right)
\ge
2\sum_{\ell=1}^{d}\mathrm{var}\left(\boldsymbol{A}_{\ell}\right)-
\mathbb{H}\left(\boldsymbol{w}\right)
\nonumber\\
&=
2\sum_{\ell=1}^{d}
\sum_{k=1}^{K}w_k\left(p_{\ell}^{\left(k\right)}-\sum_{u=1}^{K}w_u p_{\ell}^{\left(u\right)}\right)^2-
\mathbb{H}\left(\boldsymbol{w}\right).
\label{eq:semiMainBound}
\end{align}
We have already assumed that $w_{k}\leq 1-\epsilon, \forall k$. Also, due to the $\left(\mathcal{L},\delta\right)$-separability assumption, for all pairs of rows in $\boldsymbol{P}$, say $i$ and $j$, there exists a subset of columns $\mathscr{C}_{i,j}\subseteq\left\{1,2,\ldots,d\right\}$ where
\begin{equation*}
\left\vert p^{\left(i\right)}_\ell - p^{\left(j\right)}_\ell \right\vert \ge \delta~,~\ell\in\mathscr{C}_{i,j},
\end{equation*}
and $\left\vert \mathscr{C}_{i,j}\right\vert \ge \mathcal{L}$. This suggests that the values of $\mathrm{var}\left(\boldsymbol{A}_{\ell}\right)$, at least for $\ell\in\cup_{i,j}\mathcal{C}_{i,j}$, are greater than or equal to a positive function of $\epsilon$ and $\delta$. On the other hand, the only negative term $-\mathbb{H}\left(\boldsymbol{w}\right)$ is bounded and does not scale with $\mathcal{L}$. This suggests that for a large enough $\mathcal{L}$, the r.h.s. of \eqref{eq:semiMainBound} becomes strictly positive.

Lemma \ref{lemma:semiMainBound} proves that the lower-bound in \eqref{eq:semiMainBound}
can be further simplified as
\begin{align*}
\lim_{n\rightarrow\infty}\mathcal{D}\left(\boldsymbol{Q}\right)
~\stackrel{a.s.}{\ge}~
2\mathcal{L}\epsilon\left(1-\epsilon\right)\delta^2-
\mathbb{H}\left(\boldsymbol{w}^*\right)
~\ge~
2\epsilon\left(1-\epsilon\right)\delta^2
\left(\mathcal{L}-\frac{1+\log\frac{K}{\epsilon}}{2\left(1-\epsilon\right)\delta^2}\right),
\end{align*}
where $\boldsymbol{w}^*$ denotes a a weight vector, or equivalently a discrete probability distribution supported on $\left\{1,\ldots,K\right\}$, with $w^*_1=1-\epsilon$ and $w^*_i=\epsilon/\left(K-1\right)$ for $i=2,\ldots,K$. Note that the second inequality directly results from
\begin{align*}
\mathbb{H}\left(\boldsymbol{w}^*\right)=
\left(1-\epsilon\right)\log\frac{1}{1-\epsilon}+\epsilon\log\frac{K-1}{\epsilon}
\leq
\epsilon\left(1+\log\frac{K}{\epsilon}\right).
\end{align*}
Also, we have already assumed that $\mathcal{L}\ge\frac{1+\log\frac{K}{\epsilon}}{\left(1-\epsilon\right)\delta^2}$, which means the following relations hold:
\begin{align*}
\lim_{n\rightarrow\infty}\mathcal{D}\left(\boldsymbol{Q}\right)&
\stackrel{a.s.}{\ge}
2\epsilon\left(1-\epsilon\right)\delta^2
\left(\mathcal{L}-\frac{1+\log\frac{K}{\epsilon}}{2\left(1-\epsilon\right)\delta^2}\right)
\\
&\ge
2\epsilon\left(1-\epsilon\right)\delta^2
\left(
\frac{1+\log\frac{K}{\epsilon}}{\left(1-\epsilon\right)\delta^2}
-\frac{1+\log\frac{K}{\epsilon}}{2\left(1-\epsilon\right)\delta^2}\right)
\\
&\ge
\epsilon\left(1+\log\frac{K}{\epsilon}\right) = 2\tau,
\end{align*}
where the last equality is due to the definition of $\tau:=\frac{\epsilon}{2}\left(1+\log\frac{K}{\epsilon}\right)$ in the statement of Lemma \ref{lemma:Main}. This way, the first part of the proof is complete.

So far, we have shown that $\mathcal{D}\left(\boldsymbol{Q}\right)$ almost surely becomes greater than $2\tau$ when $n$ goes to infinity.
However, $\mathcal{D}\left(\boldsymbol{Q}\right)$ is supposed to be computed over a finite sample size of $n$, thus it is necessary to show that $\left\vert \mathcal{D}\left(\boldsymbol{Q}\right) - \lim_{n\rightarrow\infty}\mathcal{D}\left(\boldsymbol{Q}\right)\right\vert$ concentrates around zero w.r.t. $n$. In fact, Lemma \ref{lemma:errExponent} proves that the probability of the above error term exceeding $\tau$ decays exponentially with respect to $n$. Based on the result of Lemma \ref{lemma:errExponent}, the probability $\mathbb{P}\left\{\mathcal{D}\left(\boldsymbol{Q}\right)\leq \tau\right\}$ can be upper-bounded as
\begin{align*}
\mathbb{P}\left\{\mathcal{D}\left(\boldsymbol{Q}\right)\leq \tau\right\}\leq
2^{d+1}\exp\left(\frac{-n\epsilon^2\left(1-\epsilon\right)^2\delta^4}{d^4 2^{d+1}}\left(\mathcal{L}-\frac{1+\log\frac{K}{\epsilon}}{2\left(1-\epsilon\right)\delta^2}\right)^2\right)
\leq
2^{d+1}\exp\left(\frac{-n\tau^2}{d^4 2^{d+1}}\right),
\end{align*}
which completes the proof.
\end{proof}

\begin{lemma}
\label{lemma:semiMainBound}
The lower-bound for $\lim_{n\rightarrow\infty}\mathcal{D}\left(\boldsymbol{Q}\right)$ in \eqref{eq:semiMainBound}, subject to
$\left(\mathcal{L},\delta\right)$-separability of the frequency matrix $\boldsymbol{P}$ and assuming
$w_k\leq 1-\epsilon,\forall k$, is as follows:
\begin{equation*}
\lim_{n\rightarrow\infty}\mathcal{D}\left(\boldsymbol{Q}\right)
\stackrel{a.s.}{\ge}
2\epsilon\left(1-\epsilon\right)\delta^2\left(\mathcal{L}
-\frac{1+\log\frac{K}{\epsilon}}{2\left(1-\epsilon\right)\delta^2}\right).
\end{equation*}
\end{lemma}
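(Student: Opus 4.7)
The plan is to lower-bound the right-hand side of \eqref{eq:semiMainBound}, namely $2\sum_{\ell=1}^{d}\mathrm{var}(\boldsymbol{A}_\ell) - \mathbb{H}(\boldsymbol{w})$, by a deterministic quantity matching the claimed bound. I would treat the variance sum and the entropy term separately and then combine them.

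For the variance sum, I would start from the pairwise identity $\mathrm{var}(\boldsymbol{A}_\ell) = \sum_{i<j} w_i w_j (p_\ell^{(i)} - p_\ell^{(j)})^2$, interchange the two summations, and apply $(\mathcal{L},\delta)$-separability columnwise to obtain $\|\boldsymbol{p}^{(i)} - \boldsymbol{p}^{(j)}\|^2 \ge \mathcal{L}\delta^2$ for every pair. This yields $\sum_\ell \mathrm{var}(\boldsymbol{A}_\ell) \geq \tfrac{1}{2}\mathcal{L}\delta^2(1 - \sum_k w_k^2)$. To convert this into a scalar lower bound I would use the constraint $w_k \le 1-\epsilon$ together with the convexity of $\boldsymbol{w} \mapsto \sum_k w_k^2$ on the probability simplex: the maximum of $\sum_k w_k^2$ on the truncated simplex is attained at a vertex, which for $\epsilon \le 1/2$ is (a permutation of) $(1-\epsilon,\epsilon,0,\ldots,0)$, giving $\sum_k w_k^2 \le 1 - 2\epsilon(1-\epsilon)$. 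Hence $2\sum_\ell \mathrm{var}(\boldsymbol{A}_\ell) \ge 2\mathcal{L}\delta^2\,\epsilon(1-\epsilon)$.

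For the entropy, I would compare $\mathbb{H}(\boldsymbol{w})$ with $\mathbb{H}(\boldsymbol{w}^*)$ at the specific vector $\boldsymbol{w}^* = (1-\epsilon, \epsilon/(K-1), \ldots, \epsilon/(K-1))$. Direct computation gives $\mathbb{H}(\boldsymbol{w}^*) = (1-\epsilon)\log\tfrac{1}{1-\epsilon} + \epsilon\log\tfrac{K-1}{\epsilon}$, and the elementary inequality $\log(1+x) \le x$ applied with $x = \epsilon/(1-\epsilon)$ shows $(1-\epsilon)\log\tfrac{1}{1-\epsilon} \le \epsilon$, so $\mathbb{H}(\boldsymbol{w}^*) \le \epsilon(1+\log(K/\epsilon))$. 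Stacking the two estimates yields
$$2\sum_\ell \mathrm{var}(\boldsymbol{A}_\ell) - \mathbb{H}(\boldsymbol{w}) \ge 2\mathcal{L}\delta^2\,\epsilon(1-\epsilon) - \epsilon(1+\log(K/\epsilon)),$$
which is algebraically identical to the claimed bound $2\epsilon(1-\epsilon)\delta^2(\mathcal{L} - (1+\log(K/\epsilon))/(2(1-\epsilon)\delta^2))$.

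The delicate step is the entropy comparison: the inequality $\mathbb{H}(\boldsymbol{w}) \le \mathbb{H}(\boldsymbol{w}^*)$ is not pointwise true over the feasible set, as the uniform distribution on $K$ atoms achieves the larger value $\log K$. I expect this to be the main obstacle. The cleanest way to resolve it is a joint analysis: when $\boldsymbol{w}$ has entropy strictly above $\mathbb{H}(\boldsymbol{w}^*)$, it must also be correspondingly less concentrated, so $1-\sum_k w_k^2$ strictly exceeds its extremal value $2\epsilon(1-\epsilon)$. Viewing $\boldsymbol{w} \mapsto \mathcal{L}\delta^2(1-\sum_k w_k^2) - \mathbb{H}(\boldsymbol{w})$ as an optimization on the truncated simplex, a Lagrangian/KKT analysis pins the minimizer to a permutation of $\boldsymbol{w}^*$ in the regime $\mathcal{L}\delta^2 \gtrsim \log(K/\epsilon)$ that is exactly ensured by the hypothesis of the parent Lemma \ref{lemma:Main}. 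Once the minimizer is identified as $\boldsymbol{w}^*$, the lemma collapses to the one-line entropy computation above.
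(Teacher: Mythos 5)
Your first half is sound, and in fact slightly sharper than what the paper does: the identity $\mathrm{var}\left(\boldsymbol{A}_{\ell}\right)=\sum_{i<j}w_iw_j\bigl(p^{\left(i\right)}_{\ell}-p^{\left(j\right)}_{\ell}\bigr)^2$ plus $\left(\mathcal{L},\delta\right)$-separability gives the pointwise bound $2\sum_{\ell}\mathrm{var}\left(\boldsymbol{A}_{\ell}\right)\ge \mathcal{L}\delta^2\bigl(1-\sum_k w_k^2\bigr)$ cleanly. The genuine gap is exactly where you flag it. Since $\mathbb{H}\left(\boldsymbol{w}\right)\le\epsilon\left(1+\log\frac{K}{\epsilon}\right)$ fails on the feasible set (the uniform vector has entropy $\log K$), ``stacking the two estimates'' is not a proof, and your fallback --- a KKT analysis of $\boldsymbol{w}\mapsto\mathcal{L}\delta^2\bigl(1-\sum_kw_k^2\bigr)-\mathbb{H}\left(\boldsymbol{w}\right)$ on the truncated simplex that ``pins the minimizer to a permutation of $\boldsymbol{w}^*$'' --- is asserted, not executed, and it is the entire content of the lemma. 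That optimization is not benign: the objective is a concave quadratic plus a convex term, its stationarity condition $-2\mathcal{L}\delta^2 w_k+\log w_k+1=\lambda$ is non-monotone in $w_k$, so the KKT points include the uniform vector, two-level vectors, and points on the faces $w_k=1-\epsilon$; ruling these out (or verifying the bound at each) in the stated parameter regime is real work, and the literal claim that the global minimizer is a permutation of $\boldsymbol{w}^*$ is itself doubtful --- what you need, and what remains unproven, is only that the minimum \emph{value} exceeds $2\epsilon\left(1-\epsilon\right)\delta^2\mathcal{L}-\epsilon\left(1+\log\frac{K}{\epsilon}\right)$.

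The paper avoids the multivariate optimization altogether by slicing the feasible set according to $r:=1-\max_k w_k\in\left[\epsilon,1-1/K\right]$. On each slice the entropy bound \emph{is} pointwise legitimate: $\sup\mathbb{H}\left(\boldsymbol{w}\right)=\left(1-r\right)\log\frac{1}{1-r}+r\log\frac{K-1}{r}\le r\bigl(1+\log\frac{K}{r}\bigr)$, while the variance term is bounded below by $2r\left(1-r\right)\mathcal{L}\delta^2$; the problem then reduces to showing that the scalar function $2r\left(1-r\right)\delta^2\bigl(\mathcal{L}-\frac{1+\log\left(K/r\right)}{2\left(1-r\right)\delta^2}\bigr)$ is increasing in $r$ when $\mathcal{L}>\frac{1+\log\left(K/\epsilon\right)}{2\left(1-\epsilon\right)\delta^2}$ (a condition inherited from Lemma \ref{lemma:Main}), so the worst slice is $r=\epsilon$. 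If you want to salvage your route with minimal change, graft this slicing onto your estimate: on the slice $\max_k w_k=1-r$ your quantity satisfies $1-\sum_kw_k^2\ge 2r\left(1-r\right)$ (the same convexity-at-a-vertex argument you already use, applied within the slice), after which the one-dimensional monotonicity argument in $r$ closes the proof with no KKT analysis needed.
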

\begin{proof}
Considering the assumption made in Lemma \ref{lemma:Main} with respect to the non $\epsilon$-purity of $\boldsymbol{Q}$, let us define $\boldsymbol{W}\left(r\right)$ for $\epsilon\leq r\leq 1-1/K$ as
\begin{equation*}
\boldsymbol{W}\left(r\right):= \left\{
\boldsymbol{w}\in\mathbb{R}^K~\bigg\vert~
w_k\ge0,~
\sum_{k}w_k=1,~
\max_k~w_k=1-r
\right\}.
\end{equation*}
Hence, according to \eqref{eq:semiMainBound} the lower-bound for $\lim_{n\rightarrow\infty}\mathcal{D}\left(\boldsymbol{Q}\right)$ (for a non $\epsilon$-pure $\boldsymbol{Q}$) can be written as
\begin{equation}
\lim_{n\rightarrow\infty}\mathcal{D}\left(\boldsymbol{Q}\right)
\stackrel{a.s.}{\ge}
\inf_{r\in\left[\epsilon,1-{1}/{K}\right]}
\left\{
\inf_{\boldsymbol{w}\in\boldsymbol{W}\left(r\right)}
2\sum_{\ell=1}^{d}
\sum_{k=1}^{K}w_k\left(p_{\ell}^{\left(k\right)}-\Bar{p}_{\ell}\left(\boldsymbol{w}\right)\right)^2-
\mathbb{H}\left(\boldsymbol{w}\right)
\right\},
\label{eq:lemmaEq2Inf}
\end{equation}
with $\Bar{p}_{\ell}\left(\boldsymbol{w}\right):=\sum_{k=1}^{K}w_k p_{\ell}^{\left(k\right)}$. In fact, \eqref{eq:lemmaEq2Inf} indicates minimization of the lower-bound over all asymptotically large non $\epsilon$-pure matrices $\boldsymbol{Q}$. In order to further simplify the above lower-bound, minimization over $\boldsymbol{w}\in\boldsymbol{W}\left(r\right)$ can be carried out for the two terms in the r.h.s. of \eqref{eq:lemmaEq2Inf}, in a separate manner. Mathematically speaking,
\begin{align*}
\inf_{\boldsymbol{w}\in\boldsymbol{W}\left(r\right)}
\left\{
2\sum_{\ell=1}^{d}
\sum_{k=1}^{K}w_k\left(p_{\ell}^{\left(k\right)}-\Bar{p}_{\ell}\left(\boldsymbol{w}\right)\right)^2-
\mathbb{H}\left(\boldsymbol{w}\right)
\right\}&
\nonumber\\
\ge
\inf_{\boldsymbol{w}\in\boldsymbol{W}\left(r\right)}
&2\sum_{\ell=1}^{d}
\sum_{k=1}^{K}w_k\left(p_{\ell}^{\left(k\right)}-\Bar{p}_{\ell}\left(\boldsymbol{w}\right)\right)^2-
\sup_{\boldsymbol{w}\in\boldsymbol{W}\left(r\right)}
\mathbb{H}\left(\boldsymbol{w}\right).
\end{align*}
It should be reminded that for each weight vector (or equivalently, probability distributions) $\boldsymbol{w}\in\boldsymbol{W}\left(r\right)$, one of the components is exactly equal to $1-r$, and thus the rest of the components must sum to $r$. Therefore, the maximum Shannon entropy $\mathbb{H}\left(\boldsymbol{w}\right)$ occurs when the latter $K-1$ components have an equal probability, i.e. $r/\left(K-1\right)$, which indicates maximum possible randomness. In this regard, it is easy to see that
\begin{equation}
\sup_{\boldsymbol{w}\in\boldsymbol{W}\left(r\right)}\mathbb{H}\left(\boldsymbol{w}\right)
=
\left(1-r\right)\log\frac{1}{1-r}+\sum_{k=2}^{K}\frac{r}{K-1}\log\frac{K-1}{r}
~\leq~
r\left(1+\log\frac{K}{r}\right),
\label{eq:lemmaEq2}
\end{equation}
which is also based on the fact that $\left(1-r\right)\log\frac{1}{1-r}\leq r$. On the other hand, for the first term in r.h.s of \eqref{eq:lemmaEq2Inf}, the following lower-bound can be obtained:
\begin{align*}
\inf_{\boldsymbol{w}\in\boldsymbol{W}\left(r\right)}
2\sum_{\ell=1}^{d}
\sum_{k=1}^{K}
w_k\left(p_{\ell}^{\left(k\right)}-\Bar{p}_{\ell}\left(\boldsymbol{w}\right)\right)^2
&\ge
\inf_{\boldsymbol{w}\in\boldsymbol{W}\left(r\right)}
2\sum_{\ell=1}^{d}
\inf_{p\in\mathbb{R}}\left\{
\sum_{k=1}^{K}w_k\left(p_{\ell}^{\left(k\right)}-p\right)^2
\right\}
\\
&\ge
2\sum_{\ell=1}^{d}\inf_{p\in\mathbb{R}}~
\min_{t=1,\ldots,K}
\left\{
\left(1-r\right)\left(p^{\left(t\right)}_{\ell} - p\right)^2 + \min_{k\vert k\neq t}r
\left(p^{\left(k\right)}_{\ell} - p\right)^2
\right\}
\\
&= 2\sum_{\ell=1}^{d}\min_{k,t=1,\ldots,K\vert~k\neq t}r\left(1-r\right)
\left(p^{\left(k\right)}_{\ell}-p^{\left(t\right)}_{\ell}\right)^2.
\end{align*}
The last equality can be achieved by solving for $\inf_{p\in\mathbb{R}}$, analytically. Since for every pair $\left(k,t\right),~k\neq t$ and at least $\mathcal{L}$ dimensions out of $\ell=1,2,\ldots,d$, the inequality $\left\vert p^{\left(k\right)}_{\ell}-p^{\left(t\right)}_{\ell}\right\vert\ge \delta$ holds, one can write
\begin{equation}
\inf_{\boldsymbol{w}\in\boldsymbol{W}\left(r\right)}
2\sum_{\ell=1}^{d}
\sum_{k=1}^{K}w_k\left(p_{\ell}^{\left(k\right)}-\Bar{p}_{\ell}\left(\boldsymbol{w}\right)\right)^2
\ge
2r\left(1-r\right)\mathcal{L}\delta^2.
\label{eq:lemmaEq1}
\end{equation}
By combining the inequalities in \eqref{eq:lemmaEq2} and \eqref{eq:lemmaEq1}, the following lower-bound can be achieved for $\lim_{n\rightarrow\infty}\mathcal{D}\left(\boldsymbol{Q}\right)$:
\begin{equation}
\lim_{n\rightarrow\infty}\mathcal{D}\left(\boldsymbol{Q}\right)
\stackrel{a.s.}{\ge}
\inf_{r\in\left[\epsilon,1-{1}/{K}\right]}~
2r\left(1-r\right)\delta^2\left(\mathcal{L} - \frac{1+\log\frac{K}{r}}{2\left(1-r\right)\delta^2}
\right).
\label{eq:finLBderiv}
\end{equation}
The objective function in the r.h.s. of \eqref{eq:finLBderiv} is a monotonically increasing function w.r.t. $r$, when $\mathcal{L}>\frac{1+\log\frac{K}{\epsilon}}{2\left(1-\epsilon\right)\delta^2}$. This can be easily verified by taking derivatives w.r.t. $r$ for a sufficiently small $\epsilon$. Hence, the minimizer of $\inf_{r\in\left[\epsilon,1-{1}/{K}\right]}$ occurs when $r=\epsilon$, which completes the proof.
\end{proof}
\begin{lemma}
\label{lemma:errExponent}
For $n,d\in\mathbb{N}$, assume the rows of $\boldsymbol{Q}\in\left\{0,1\right\}^{n\times d}$ to be $n$ i.i.d. samples drawn from a BMM with an arbitrary parameter set. Then, the probability of observing a deviation error of $\varepsilon>0$ between $\mathcal{D}\left(\boldsymbol{Q}\right)$ and the asymptotic measure $\lim_{n\rightarrow\infty}\mathcal{D}\left(\boldsymbol{Q}\right)$ can be upper-bounded as
\begin{equation*}
\mathcal{P}\left\{
\left\vert
\mathcal{D}\left(\boldsymbol{Q}\right)-
\lim_{n\rightarrow\infty}\mathcal{D}\left(\boldsymbol{Q}\right)
\right\vert
> \varepsilon
\right\}
\leq
2^{d+1}\exp\left(
\frac{-n\varepsilon^2}{d^4 2^{d+1}}
\right).
\end{equation*}
\end{lemma}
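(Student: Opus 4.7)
The plan is to rewrite the empirical total correlation as a simple combination of Shannon entropies, then combine coordinate-wise Hoeffding concentration with a Fannes-type continuity bound for entropy.

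I would begin by establishing the identity
\begin{equation*}
\mathcal{D}(\boldsymbol{Q}) \;=\; \sum_{\ell=1}^{d} H(\hat{p}_\ell) \;-\; H\bigl(\hat{\mathbb{P}}_{1,\boldsymbol{Q}}\bigr),
\end{equation*}
which follows from expanding the KL divergence and using $\sum_{x}\hat{\mathbb{P}}_{1,\boldsymbol{Q}}(x)\,x_\ell = \hat{p}_\ell$ to rewrite $\mathbb{E}_{\hat{\mathbb{P}}_{1,\boldsymbol{Q}}}[\log \hat{\mathbb{P}}_{2,\boldsymbol{Q}}]$ as $-\sum_\ell H(\hat{p}_\ell)$. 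Letting $n \to \infty$ gives the analogous identity with the true marginals $\bar{p}_\ell = \sum_k w_k p_\ell^{(k)}$ and true joint $P_1 = \sum_k w_k \mathbb{P}_{\boldsymbol{p}^{(k)}}$ in place of their empirical counterparts. Hence $\mathcal{D}(\boldsymbol{Q}) - \lim_{n\to\infty}\mathcal{D}(\boldsymbol{Q})$ is a signed sum of at most $d+1$ entropy deviations, each driven only by how far the empirical frequencies are from their asymptotic values.

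Next, I would control these frequency deviations uniformly. For every $x \in \{0,1\}^d$, the mass $\hat{\mathbb{P}}_{1,\boldsymbol{Q}}(x)$ is an average of $n$ i.i.d.\ Bernoulli indicators, so Hoeffding's inequality gives $\mathbb{P}(|\hat{\mathbb{P}}_{1,\boldsymbol{Q}}(x) - P_1(x)| > \eta) \leq 2e^{-2n\eta^2}$, and the same bound holds for each marginal $\hat{p}_\ell$. A union bound over the at most $2^d + d \leq 2^{d+1}$ such events shows that, with probability at least $1 - 2^{d+2}e^{-2n\eta^2}$, every empirical frequency lies within $\eta$ of its true value. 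On this good event, a Fannes-type continuity inequality converts frequency errors into entropy errors: since $\|\hat{\mathbb{P}}_{1,\boldsymbol{Q}} - P_1\|_\infty \leq \eta$ implies $\|\hat{\mathbb{P}}_{1,\boldsymbol{Q}} - P_1\|_1 \leq 2^d \eta$, one obtains $|H(\hat{\mathbb{P}}_{1,\boldsymbol{Q}}) - H(P_1)| \leq c\cdot 2^d \eta \log(1/\eta)$, and analogously $|H(\hat{p}_\ell) - H(\bar{p}_\ell)| \leq c\eta\log(1/\eta)$ for each $\ell$. Summing yields $|\mathcal{D}(\boldsymbol{Q}) - \lim_{n\to\infty}\mathcal{D}(\boldsymbol{Q})| \leq C\cdot 2^d \eta \log(1/\eta)$ for a universal constant $C$.

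It then remains to choose $\eta$ so that this right-hand side equals $\varepsilon$ and substitute into the Hoeffding/union bound, producing the claimed form $2^{d+1}\exp(-n\varepsilon^2/(d^{O(1)}2^{O(d)}))$. The main technical obstacle will be matching the exact denominator $d^4\,2^{d+1}$ in the exponent: a naive Fannes+max-norm argument yields a looser $2^{2d}$ dependence, so the proof must carefully trade off the $L^\infty$-to-$L^1$ conversion against the $\log(1/\eta)$ correction arising from the non-Lipschitz behavior of $p \mapsto p\log p$ near the endpoints of $[0,1]$, and absorb the resulting logarithmic factors into the polynomial-in-$d$ term $d^4$. Beyond Hoeffding's inequality and Fannes-type continuity bounds, no further probabilistic ingredient is needed.
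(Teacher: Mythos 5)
Your opening reduction is fine: the identity $\mathcal{D}(\boldsymbol{Q})=\sum_{\ell}H(\hat{p}_\ell)-H(\hat{\mathbb{P}}_{1,\boldsymbol{Q}})$ is correct, and so is passing to the limit by the law of large numbers. The genuine gap is in the concentration step, and you have in fact named it yourself: a uniform Hoeffding bound with accuracy $\eta$ on every cell, followed by the $L^\infty$-to-$L^1$ conversion $\|\hat{\mathbb{P}}_{1,\boldsymbol{Q}}-P_1\|_1\le 2^d\eta$ and Fannes continuity, forces $\eta\asymp\varepsilon\,2^{-d}/\log(1/\eta)$ and hence an exponent of order $-n\varepsilon^2/(2^{2d}\log^2(1/\eta))$. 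This cannot be repaired by "absorbing the logarithmic factors into $d^4$": the discrepancy between $2^{2d}$ and the claimed $2^{d+1}$ is exponential in $d$, not polynomial, and moreover $\log(1/\eta)\asymp d+\log(1/\varepsilon)$ depends on $\varepsilon$, so it cannot be hidden in a $d$-polynomial at all. As written, your route proves only a strictly weaker statement than the lemma (which would, incidentally, degrade the exponent of $1/\epsilon$ in Theorem 1).

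What is missing is a variance-adaptive argument, which is exactly how the paper gets $2^{d+1}$. The paper bounds the error by a weighted sum, $|\mathcal{D}(\boldsymbol{Q})-\lim_n\mathcal{D}(\boldsymbol{Q})|\le d\sum_{\boldsymbol{X}}\bigl(\log\frac{1}{\mathbb{P}_{\mathcal{B}}(\boldsymbol{X})}+1\bigr)\,|\hat{\mathbb{P}}_{\boldsymbol{Q}}(\boldsymbol{X})-\mathbb{P}_{\mathcal{B}}(\boldsymbol{X})|$ (obtained via a mean-value/gradient estimate for $g(\hat{\mathbb{P}}_{\boldsymbol{Q}})=\mathcal{D}(\boldsymbol{Q})$ along a coordinate path), then normalizes each cell deviation by $\sigma_{\boldsymbol{X}}=\sqrt{\mathbb{P}_{\mathcal{B}}(\boldsymbol{X})(1-\mathbb{P}_{\mathcal{B}}(\boldsymbol{X}))}$ and uses the key inequality $\sum_{\boldsymbol{X}}\bigl(\log\frac{1}{\mathbb{P}_{\mathcal{B}}(\boldsymbol{X})}+1\bigr)\sigma_{\boldsymbol{X}}\le d\,2^{d/2}$. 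Requiring each normalized deviation to stay below $\delta=\varepsilon/(d^2 2^{d/2})$ and bounding each such event by a Chernoff/KL (Bernstein-type) estimate, whose rate $\ge\delta^2\sigma_{\boldsymbol{X}}^2/\bigl(2\mathbb{P}_{\mathcal{B}}(\boldsymbol{X})(1-\mathbb{P}_{\mathcal{B}}(\boldsymbol{X}))\bigr)=\delta^2/2$ is uniform over cells, yields the stated $2^{d+1}\exp(-n\varepsilon^2/(d^4 2^{d+1}))$. The point you should internalize is that cells with tiny mass both contribute little to the error (through the $\sigma_{\boldsymbol{X}}$ weight, and with the $\log(1/\mathbb{P}_{\mathcal{B}})$ factor absorbed into a single power of $d$) and concentrate much faster than the worst-case Hoeffding rate; treating all $2^d$ cells at the same scale $\eta$, as in your proposal, throws away exactly the factor $2^{d/2}$ (squared to $2^d$ in the exponent) that the lemma needs.
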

\begin{proof}
Let us define $\hat{\mathbb{P}}_{\boldsymbol{Q}}$ as the empirical measure underlying the rows of $\boldsymbol{Q}$ (same is $\hat{\mathbb{P}}_{1,\boldsymbol{Q}}$ in Definition \ref{def:DQ}).
According to Definition \ref{def:DQ}, $\mathcal{D}\left(\boldsymbol{Q}\right)$ only depends on $\hat{\mathbb{P}}_{\boldsymbol{Q}}$, and thus permutation of the rows of $\boldsymbol{Q}$ does not affect its value. In this regard, and for the sake of simplicity, let us define $g:\mathbb{R}^{2^d}\rightarrow\mathbb{R}$ such that $g\left(\hat{\mathbb{P}}_{\boldsymbol{Q}}\right):=\mathcal{D}\left(\boldsymbol{Q}\right)$, i.e. a function that maps the empirical distribution $\hat{\mathbb{P}}_{\boldsymbol{Q}}$ to $\mathcal{D}\left(\boldsymbol{Q}\right)$. 
According to Definition \ref{def:DQ}, it can be readily verified that
\begin{align}
\label{eq:Ddef2}
\mathcal{D}\left(\boldsymbol{Q}\right)&=\sum_{\boldsymbol{X}\in\left\{0,1\right\}^d}\hat{\mathbb{P}}_{\boldsymbol{Q}}\left(\boldsymbol{X}\right)\log\left(
\frac{
\hat{\mathbb{P}}_{\boldsymbol{Q}}\left(\boldsymbol{X}\right)
}{
\prod_{\ell=1}^{d}\hat{p}_{\ell}^{X_{\ell}}\left(1-\hat{p}_{\ell}\right)^{1-X_{\ell}}
}
\right)
\nonumber\\
&=\sum_{\boldsymbol{X}\in\left\{0,1\right\}^d}
\hat{\mathbb{P}}_{\boldsymbol{Q}}\left(\boldsymbol{X}\right)
\left[
\log\hat{\mathbb{P}}_{\boldsymbol{Q}}\left(\boldsymbol{X}\right)
-
\sum_{\ell=1}^{d}
\boldsymbol{1}_{X_\ell}\log\hat{p}_\ell
+
\boldsymbol{1}_{1-X_\ell}
\log\left(1-\hat{p}_\ell\right)
\right]
\nonumber \\
&=
\sum_{\boldsymbol{X}\in\left\{0,1\right\}^d}\hat{\mathbb{P}}_{\boldsymbol{Q}}\left(\boldsymbol{X}\right)
\left[
\log\hat{\mathbb{P}}_{\boldsymbol{Q}}\left(\boldsymbol{X}\right)
-
\sum_{\ell=1}^{d}\log
\left(\sum_{\boldsymbol{X}'\in\left\{0,1\right\}^d\vert X'_\ell=X_\ell}
\hat{\mathbb{P}}_{\boldsymbol{Q}}\left(\boldsymbol{X}'\right)
\right)
\right],
\end{align}
where $\boldsymbol{1}_X$ denotes the indicator function which returns $1$ if $X=1$ and zero otherwise. During the derivation of \eqref{eq:Ddef2}, we have used the following two facts for $\ell=1,\ldots,d$:
\begin{equation*}
\hat{p}_\ell = \sum_{\boldsymbol{X}'\in\left\{0,1\right\}^d\vert X'_\ell=1}\hat{\mathbb{P}}_{\boldsymbol{Q}}\left(
\boldsymbol{X}'
\right),
\quad\quad
1-\hat{p}_\ell = \sum_{\boldsymbol{X}'\in\left\{0,1\right\}^d\vert X'_\ell=0}\hat{\mathbb{P}}_{\boldsymbol{Q}}\left(
\boldsymbol{X}'
\right).
\end{equation*}
Since $g$ is a continuous function, the {\it {law of large numbers}} implies that
\begin{equation*}
\lim_{n\rightarrow\infty}\mathcal{D}\left(\boldsymbol{Q}\right)=g\left(
\lim_{n\rightarrow\infty}\hat{\mathbb{P}}_{\boldsymbol{Q}}\right)
\stackrel{a.s.}{=}
g\left(\mathbb{P}_{\mathcal{B}}\right),
\end{equation*}
where $\mathbb{P}_{\mathcal{B}}$ represents the true distribution of the BMM $\mathcal{B}$ that underlies the rows of $\boldsymbol{Q}$. Obviously, unlike the empirical measure $\hat{\mathbb{P}}_{\boldsymbol{Q}}$, $\mathbb{P}_{\mathcal{B}}$ is a deterministic distribution which can be quantified based on the parameters of ${\mathcal{B}}$. In this regard, differential calculus implies the following relation:
\begin{align*}
\left\vert
\mathcal{D}\left(\boldsymbol{Q}\right) - \lim_{n\rightarrow\infty}\mathcal{D}\left(\boldsymbol{Q}\right)
\right\vert\stackrel{a.s.}{=}&
\left\vert
g\left(\hat{\mathbb{P}}_{\boldsymbol{Q}}\right)-g\left(\mathbb{P}_{\mathcal{B}}\right)
\right\vert=
\left\vert\int_{\mathscr{P}}\left\langle\nabla g\big\vert\mathrm{d}\mathscr{P}\right\rangle\right\vert,
\end{align*}
where $\nabla$ denotes the gradient operator, $\left\langle\cdot\vert\cdot\right\rangle$ denotes the inner product, and $\mathscr{P}$ is an arbitrary continuous path in $\mathbb{R}^{2^d}$ that starts from $\hat{\mathbb{P}}_{\boldsymbol{Q}}$ and ends in $\mathbb{P}_{\mathcal{B}}$ \footnote{In the proceeding relations, we also show that $g$ is differentiable. Therefore, $\nabla g:\mathbb{R}^{2^d}\rightarrow\mathbb{R}^{2^d}$ exists.}. Let us consider the following particular path $\mathscr{P}$: The union of $2^d$ sub-paths, where each sub-path is aligned to a distinct axis of $\mathbb{R}^{2^d}$. Thus, we move from $\hat{\mathbb{P}}_{\boldsymbol{Q}}$ to $\mathbb{P}_{\mathcal{B}}$ in $2^d$ steps, where at each step we only change one of the components and keep the rest fixed. Let us denote the above-mentioned $2^d$ sub-paths with $\mathscr{P}_{\boldsymbol{X}},~\boldsymbol{X}\in\left\{0,1\right\}^d$. In this regard, while moving along the axis that corresponds to a particular  $\boldsymbol{X}\in\left\{0,1\right\}^d$, the term $\left\langle\nabla g\vert\mathrm{d}\mathscr{P}\right\rangle$ simply becomes $\nabla_{\boldsymbol{X}}g~\mathrm{d}s$, where $s$ denotes the length parameter of the sub-path associated to component $\boldsymbol{X}$ and $\nabla_{\boldsymbol{X}}g:\mathbb{R}^{2^d}\rightarrow\mathbb{R}$ denotes the component of the $2^d$-dimensional gradient $\nabla g$ which corresponds to $\boldsymbol{X}$.

With the above specifications for $\mathscr{P}$, and using the {\it {Mean Value Theorem (MVT)}} \cite{courant2011differential}, one can write:
\begin{align*}
\left\vert
\mathcal{D}\left(\boldsymbol{Q}\right) - \lim_{n\rightarrow\infty}\mathcal{D}\left(\boldsymbol{Q}\right)
\right\vert
~\stackrel{a.s.}{\leq}~&
\sum_{\boldsymbol{X}\in\left\{0,1\right\}^{d}}
\left\vert
\int_{\mathscr{P}_{\boldsymbol{X}}}
\nabla_{\boldsymbol{X}}g
\mathrm{d}s
\right\vert
\\
\stackrel{\mathrm{(MVT)}}{\leq}&
\sum_{\boldsymbol{X}\in\left\{0,1\right\}^d}
\left(
\sup_{\boldsymbol{\nu}\in\mathscr{P}}
\left\vert
\nabla_{\boldsymbol{X}}g\left({\boldsymbol{\nu}}\right)
\right\vert
\right)
\left\vert
\hat{\mathbb{P}}_{\boldsymbol{Q}}\left(\boldsymbol{X}\right)
-\mathbb{P}_{\mathcal{B}}\left(\boldsymbol{X}\right)
\right\vert.
\end{align*}
According to \eqref{eq:Ddef2}, it is easy to show that partial derivatives of $g$ can be exactly computed at the true distribution $\mathbb{P}_{\mathcal{B}}$ through the following formula:
\begin{equation*}
\nabla_{\boldsymbol{X}}g
=
\log\left(\frac{
\mathbb{P}_{\mathcal{B}}\left(\boldsymbol{X}\right)
}{
\prod_{\ell=1}^{d}\left(\sum_{\boldsymbol{X}'\in\left\{0,1\right\}^d\vert X'_\ell=X_\ell}\mathbb{P}_{\mathcal{B}}\left(\boldsymbol{X}'\right)\right)
}
\right)-\left(d-1\right),~~ \forall\boldsymbol{X}\in\left\{0,1\right\}^{d}.
\end{equation*}
Considering the fact that $\sum_{\boldsymbol{X}'\in\left\{0,1\right\}^d\vert X'_\ell=X_\ell}\mathbb{P}_{\mathcal{B}}\left(\boldsymbol{X}'\right)\ge\mathbb{P}_{\mathcal{B}}\left(\boldsymbol{X}\right)$, the following upper-bound holds for the partial derivatives of $g$ for all $\boldsymbol{X}\in\left\{0,1\right\}^d$ and sufficiently large $n$:
\begin{align*}
\sup_{\boldsymbol{\nu}\in\mathscr{P}}
\left\vert
\nabla_{\boldsymbol{X}}g\left(\boldsymbol{\nu}\right)
\right\vert
~\leq~
\left\vert
\log\left(\frac
{\mathbb{P}_{\mathcal{B}}\left(\boldsymbol{X}\right)}
{\prod_{\ell=1}^{d}\mathbb{P}_{\mathcal{B}}\left(\boldsymbol{X}\right)}
\right)\right\vert+\left(d-1\right)
~\leq~
d\left(\log\frac{1}{\mathbb{P}_{\mathcal{B}}\left(\boldsymbol{X}\right)}+1\right).
\end{align*}
So far, we have managed to upper-bound the estimation error in the current lemma by the following inequality:
\begin{align}
\label{eq:SigmaDare}
\left\vert
\mathcal{D}\left(\boldsymbol{Q}\right) - \lim_{n\rightarrow\infty}\mathcal{D}\left(\boldsymbol{Q}\right)
\right\vert
&\stackrel{a.s.}{\leq}
d\sum_{\boldsymbol{X}\in\left\{0,1\right\}^d}
\left(\log\frac{1}{\mathbb{P}_{\mathcal{B}}\left(\boldsymbol{X}\right)}+1\right)
\left\vert
\hat{\mathbb{P}}_{\boldsymbol{Q}}\left(\boldsymbol{X}\right)
-\mathbb{P}_{\mathcal{B}}\left(\boldsymbol{X}\right)
\right\vert
\\
&=
d\sum_{\boldsymbol{X}\in\left\{0,1\right\}^d}
\left(\log\frac{1}{\mathbb{P}_{\mathcal{B}}\left(\boldsymbol{X}\right)}+1\right)
\sigma_{\boldsymbol{X}}\left\vert
\frac{
\hat{\mathbb{P}}_{\boldsymbol{Q}}\left(\boldsymbol{X}\right) - 
\mathbb{P}_{\mathcal{B}}\left(\boldsymbol{X}\right)}
{\sigma_{\boldsymbol{X}}}
\right\vert
\nonumber\\
&\leq
d\left(\max_{\boldsymbol{X}}~\left\vert
\frac{\hat{\mathbb{P}}_{\boldsymbol{Q}}\left(\boldsymbol{X}\right)
- \mathbb{P}_{\mathcal{B}}\left(\boldsymbol{X}\right)}
{\sigma_{\boldsymbol{X}}}
\right\vert
\right)
\sum_{\boldsymbol{X}\in\left\{0,1\right\}^d}
\left(\log\frac{1}{\mathbb{P}_{\mathcal{B}}\left(\boldsymbol{X}\right)}+1\right)
\sigma_{\boldsymbol{X}},
\nonumber
\end{align}
where $\sigma_{\boldsymbol{X}}:=\sqrt{\mathbb{P}_{\mathcal{B}}\left(\boldsymbol{X}\right)\left(1-\mathbb{P}_{\mathcal{B}}\left(\boldsymbol{X}\right)\right)}
$.

An important issue that should be noted is that for those cases where $\mathbb{P}_{\mathcal{B}}\left(\boldsymbol{X}\right)=0$ or $1$, we have $\sigma_{\boldsymbol{X}}=0$. However, in such cases, the empirical probabilities always coincide with the true ones, and the corresponding error terms in the above summation become exactly zero. As a result, such cases are implicitly omitted from all the summations in \eqref{eq:SigmaDare}.

It is easy to show that the summation over $\boldsymbol{X}\in\left\{0,1\right\}^d$ in the r.h.s. of \eqref{eq:SigmaDare} reaches its maximum when $\mathbb{P}_{\mathcal{B}}\left(\boldsymbol{X}\right)=2^{-d}$ for all $\boldsymbol{X}$, which means
\begin{equation*}
\sum_{\boldsymbol{X}\in\left\{0,1\right\}^d}
\left(\log\frac{1}{\mathbb{P}_{\mathcal{B}}\left(\boldsymbol{X}\right)}+1\right)
\sigma_{\boldsymbol{X}}
\leq
d2^{d/2}.
\end{equation*}
The only remaining part of the proof is to bound the difference between the true distribution $\mathbb{P}_{\mathcal{B}}$ and the empirical one $\hat{\mathbb{P}}_{\boldsymbol{Q}}$. Let us define the set of events $A_{\boldsymbol{X}}~,~\forall \boldsymbol{X}\in\left\{0,1\right\}^d$ as
\begin{equation*}
A_{\boldsymbol{X}}~:= ~\left\vert
\frac{
\hat{\mathbb{P}}_{\boldsymbol{Q}}\left(\boldsymbol{X}\right) - \mathbb{P}_{\mathcal{B}}\left(\boldsymbol{X}\right)}
{\sigma_{\boldsymbol{X}}}
\right\vert > \delta,
\quad\mathrm{where}~\delta:=\frac{\varepsilon}{d^2 2^{d/2}}.
\end{equation*}
Based on the previous relations, it can be verified that if none of the events $A_{\boldsymbol{X}}$ occur, then we almost surely have $\left\vert
\mathcal{D}\left(\boldsymbol{Q}\right) - \lim_{n\rightarrow\infty}\mathcal{D}\left(\boldsymbol{Q}\right)
\right\vert
\leq\varepsilon$. Therefore, for $\varepsilon>0$, and using both the Union Bound (UB) and Chernoff Bound (CB), one can show
\begin{align*}
\mathbb{P}&\left\{
\left\vert
\mathcal{D}\left(\boldsymbol{Q}\right) - \lim_{n\rightarrow\infty}\mathcal{D}\left(\boldsymbol{Q}\right)
\right\vert
>\varepsilon
\right\}
\leq \mathbb{P}\left\{
\bigcup_{\boldsymbol{X}\in\left\{0,1\right\}^d}
A_{\boldsymbol{X}}
\right\}
\stackrel{UB}{\leq}\sum_{\boldsymbol{X}\in\left\{0,1\right\}^d}\mathbb{P}\left\{
A_{\boldsymbol{X}}
\right\}
\\
&\hspace{35mm}
\stackrel{CB}{\leq}
\sum_{\boldsymbol{X}\in\left\{0,1\right\}^d}
e^{-n\mathcal{D}_{\mathrm{KL}}\left(
\mathbb{P}_{\mathcal{B}}\left(\boldsymbol{X}\right)+\delta\sigma_{\boldsymbol{X}}
\big\Vert
\mathbb{P}_{\mathcal{B}}\left(\boldsymbol{X}\right)
\right)
}
+\sum_{\boldsymbol{X}\in\left\{0,1\right\}^d}
e^{-n\mathcal{D}_{\mathrm{KL}}\left(
\mathbb{P}_{\mathcal{B}}\left(\boldsymbol{X}\right)-\delta\sigma_{\boldsymbol{X}}
\big\Vert
\mathbb{P}_{\mathcal{B}}\left(\boldsymbol{X}\right)\right)
},
\end{align*}
where $\mathcal{D}_{\mathrm{KL}}\left(\cdot\Vert\cdot\right)$ represents the Kullback-Leibler divergence, and by $\mathcal{D}_{\mathrm{KL}}\left(x\Vert y\right)$ for $x,y\in\left[0,1\right]$ we mean
\begin{equation*}
x\log\frac{x}{y} + \left(1-x\right)\log\frac{1-x}{1-y}.
\end{equation*}
For $x\notin\left[0,1\right]$, let us define $\mathcal{D}_{\mathrm{KL}}\left(x\Vert y\right):=+\infty$.

KL divergence can be lower-bounded according to Chernoff's theorem \cite{diakonikolas2016learning}. In other words, we have
\begin{align*}
\mathbb{P}\left\{
\left\vert
\mathcal{D}\left(\boldsymbol{Q}\right) - \lim_{n\rightarrow\infty}\mathcal{D}\left(\boldsymbol{Q}\right)
\right\vert
>\varepsilon
\right\}
&\leq
2\cdot 2^d \cdot \max_{\boldsymbol{X}\in\left\{0,1\right\}^d}\max_{\theta\in\left\{-1,+1\right\}}
e^{-n\mathcal{D}_{\mathrm{KL}}\left(
\mathbb{P}_{\mathcal{B}}\left(\boldsymbol{X}\right)+\theta\delta\sigma_{\boldsymbol{X}}
\big\Vert
\mathbb{P}_{\mathcal{B}}\left(\boldsymbol{X}\right)\right)
}
\\
&\leq
2^{d+1}
\max_{\boldsymbol{X},\theta}~
\exp\left(
\frac{-n\delta^2\theta^2\sigma^2_{\boldsymbol{X}}}{2\mathbb{P}_{\mathcal{B}}\left(\boldsymbol{X}\right)\left(1-
\mathbb{P}_{\mathcal{B}}\left(\boldsymbol{X}\right)
\right)}
\right).
\end{align*}
By substituting for $\delta$ and considering the definition of $\sigma_{\boldsymbol{X}}$, the probability of observing a deviation greater than $\varepsilon$ in estimating $\lim_{n\rightarrow\infty}\mathcal{D}\left(\boldsymbol{Q}\right)$ can be upper-bounded as
\begin{equation*}
\mathcal{P}\left\{
\left\vert
\mathcal{D}\left(\boldsymbol{Q}\right)-
\lim_{n\rightarrow\infty}\mathcal{D}\left(\boldsymbol{Q}\right)
\right\vert
> \varepsilon
\right\}
\leq
2^{d+1}\exp\left(
\frac{-n\varepsilon^2}{d^4 2^{d+1}}
\right),
\end{equation*}
which completes the proof.
\end{proof}
\begin{proof}[Proof of Lemma \ref{lemma:errExponent2}]
The proof is highly similar to that of Lemma \ref{lemma:errExponent}. The main difference lies in the fact that when $K=1$, i.e. a single Bernoulli model, one can easily verify that for all $\boldsymbol{X}\in\left\{0,1\right\}^{d}$, we have:
\begin{gather*}
\nabla_{\boldsymbol{X}}g=
\log\left(\frac
{\mathbb{P}_{\mathcal{B}}\left(\boldsymbol{X}\right)}
{\prod_{\ell=1}^{d}
\left(\sum_{\boldsymbol{X}'\in\left\{0,1\right\}^d\vert X'_\ell=X_\ell}{\mathbb{P}_{\mathcal{B}}}\left(\boldsymbol{X}'\right)\right)}
\right)-\left(d-1\right)=1-d,
\end{gather*}
since in a single Bernoulli model, the probability distribution equals to the product of its marginals over each dimension. Therefore, we have $\left\vert \nabla_{\boldsymbol{X}}g \right\vert\leq d$. Following the same steps as shown in the proof of Lemma \ref{lemma:errExponent} gives us the claimed inequality and complete the proof.
\end{proof}

\begin{proof}[Proof of Lemma \ref{thm:corl1-2}]
Recall $\mathrm{Col}\left(\boldsymbol{Y};d\right)$ as the set of all $\binom{L}{d}$ sub-matrices of $\boldsymbol{Y}$ with $d$ columns. Then, the first inequality states that the probability of  $\exists \boldsymbol{Q}\in\mathrm{Col}\left(\boldsymbol{Y};d\right)\Rightarrow\mathcal{D}\left(\boldsymbol{Q}\right)<\tau$ is strictly bounded.

In the following, we show that by examining all $\binom{L}{d}$ sub-matrices in $\mathrm{Col}\left(\boldsymbol{Y};d\right)$, one can find at least $h:=\left\lfloor \mathcal{L}/d\right\rfloor$ disjoint column sub-matrices of $\boldsymbol{Y}$, denoted by $\boldsymbol{Q}_1,\ldots,\boldsymbol{Q}_h$, such that the frequency sub-matrices that correspond to $Q_i$s are guaranteed to be at least $\left(\left\lfloor 
\frac{2d}{K\left(K-1\right)}
\right\rfloor,\delta\right)$-separable: 
\\[1mm]
\noindent
First, it should be noted that frequency matrix $\boldsymbol{P}$ is assumed to be $\left(\mathcal{L},\delta\right)$-separable. Similar to the notation we used in the proof of Lemma \ref{lemma:Main}, it can said that for all pairs of rows in $\boldsymbol{P}$, say $i$ and $j$, there exists a subset of columns $\mathscr{C}_{i,j}\subseteq\left\{1,2,\ldots,L\right\}$, where
\begin{equation*}
\left\vert p^{\left(i\right)}_\ell - p^{\left(j\right)}_\ell \right\vert \ge \delta~,~\ell\in\mathscr{C}_{i,j},
\end{equation*}
and $\left\vert \mathscr{C}_{i,j}\right\vert \ge \mathcal{L}$. For each $i=1,\ldots,K-1$, let us take $\left\lfloor 2d/\left[K(K-1)\right] \right\rfloor$ arbitrarily chosen indices from each of the $K-i$ sets $\mathscr{C}_{i,j},~j>i$ and then put them in some new corresponding sets, denoted by $\mathscr{D}_{i,j},~j>i$. It should be noted that $\mathscr{D}_{i,j}$s may have non-empty overlaps. Let 
$$
\mathscr{D}:=\bigcup_{j>i}\mathscr{D}_{i,j}.
$$
Obviously, $\mathscr{D}$ cannot have more than $d$ members, since its the union of $K\left(K-1\right)/2$ sets, each having $\left\lfloor 2d/\left[K(K-1)\right] \right\rfloor$ members. In many practical situations, {\it {informative dimensions}} are dispersed randomly and thus $\mathscr{D}_{i,j}$s can be chosen to have huge overlaps. However, we consider the worst case which assumes the overlaps are empty. Also, for the cases where $\left\vert\mathscr{D}\right\vert<d$, assume we add enough arbitrary indices to $\mathscr{D}$ until it has $d$ members. In this regard, the indices in $\mathscr{D}$ correspond to a sub-matrix of frequency matrix $\boldsymbol{P}$ that is at least $\left(\left\lfloor 2d/\left[K(K-1)\right] \right\rfloor,\delta\right)$-separable.

On the other hand, we can repeat the above procedure for at least $h:=\left\lfloor\mathcal{L}/d \right\rfloor$ times without choosing any dimension more than once. This results in at least $h$ disjoint sub-matrices, called $\boldsymbol{Q}_1,\ldots,\boldsymbol{Q}_h$, that possess the above-mentioned property. Since $\boldsymbol{Q}_1,\ldots,\boldsymbol{Q}_h$ do not overlap with each other, they are statistically independent which then implies
\begin{align*}
\mathbb{P}\left\{\mathcal{D}_{\max}\left(\boldsymbol{Y},d\right)\leq \tau\right\}
\leq
\mathbb{P}\left\{\mathcal{D}\left(\boldsymbol{Q}_i\right)\leq \tau~,~\forall i\right\}
=
\prod_{i=1}^{h}\mathbb{P}\left\{\mathcal{D}\left(\boldsymbol{Q}_i\right)\leq \tau\right\}.
\end{align*}
Using the upper-bound for each $\mathbb{P}\left\{\mathcal{D}\left(\boldsymbol{Q}_i\right)\leq \tau\right\}$ from Lemma \ref{lemma:Main} and approximating $h$ with $\mathcal{L}/d$, one can simply prove the claimed inequality.

For the second inequality in the statement of Lemma \ref{thm:corl1-2}, one can simply employ the union bound as follows:
\begin{align}
\mathbb{P}\left\{
\mathcal{D}_{\max}\left(\boldsymbol{Y};d\right)>\tau
\right\}
\leq
\mathbb{P}\left\{
\max_{\boldsymbol{Q}\in\mathrm{Col}\left(\boldsymbol{Y};d\right)}\mathcal{D}\left(\boldsymbol{Q}\right)>\tau
\right\}
\leq
\sum_{\boldsymbol{Q}\in\mathrm{Col}\left(\boldsymbol{Y};d\right)}
\mathbb{P}\left\{
\mathcal{D}\left(\boldsymbol{Q}\right)>\tau
\right\}.
\label{eq:corl2proof}
\end{align}
Also, note that $\mathrm{Col}\left(\boldsymbol{Y};d\right)$ includes $\binom{L}{d}$ members. Again, substitution of $\mathbb{P}\left\{\mathcal{D}\left(\boldsymbol{Q}\right)>\tau\right\}$ with the upper-bound derived in Lemma \ref{lemma:errExponent} gives us the claimed inequality and completes the proof.
\end{proof}

\begin{lemma}
\label{lemma:minClusterMember}
Consider $\mathcal{M}=\mathcal{M}\left(K,\boldsymbol{w}\right)$ to be a multinomial distribution with $K$ mutually exclusive outcomes and corresponding probability vector $\boldsymbol{w}=\left(w_1,\ldots,w_K\right)$. Assume there exists $\alpha>0$ such that $\min_k w_k\ge\alpha$. Let $\boldsymbol{D}:=\left\{X_1,\ldots,X_n\right\}$ to be $n$ i.i.d. samples drawn from $\mathcal{M}$. For $\zeta>0$, assume
\begin{equation*}
n\ge \frac{2}{\alpha^2}\log\frac{3K}{\zeta}.
\end{equation*}
Then, with probability at least $1-\zeta/3$, the size of the smallest cluster in $\boldsymbol{D}$ is least $\alpha n/2$.
\end{lemma}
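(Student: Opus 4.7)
The plan is to treat each cluster's count as a binomial random variable and bound the lower tail via a standard concentration inequality, then apply the union bound over the $K$ clusters. Concretely, for each $k\in\{1,\ldots,K\}$, let $N_k := \sum_{i=1}^{n}\mathbf{1}\{X_i=k\}$ denote the number of samples falling in cluster $k$. Each $N_k$ is binomial with parameters $n$ and $w_k$, so $\mathbb{E}N_k = nw_k \ge n\alpha$.

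The event of interest, ``the smallest cluster has fewer than $\alpha n/2$ members,'' is precisely $\bigcup_{k=1}^{K}\{N_k < \alpha n/2\}$. For a fixed $k$, since $w_k\ge\alpha$, the deviation $nw_k - \alpha n/2 \ge n\alpha/2 > 0$, so Hoeffding's inequality (or equivalently a Chernoff bound on the lower tail of a binomial) gives
\begin{equation*}
\mathbb{P}\left\{N_k < \tfrac{\alpha n}{2}\right\}
\le
\mathbb{P}\left\{N_k \le nw_k - \tfrac{n\alpha}{2}\right\}
\le
\exp\!\left(-\tfrac{2(n\alpha/2)^2}{n}\right)
=
\exp\!\left(-\tfrac{n\alpha^2}{2}\right).
\end{equation*}

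A union bound over the $K$ clusters then yields
\begin{equation*}
\mathbb{P}\left\{\min_k N_k < \tfrac{\alpha n}{2}\right\}
\le
K\exp\!\left(-\tfrac{n\alpha^2}{2}\right).
\end{equation*}
Requiring this bound to be at most $\zeta/3$ is equivalent to $n\ge \tfrac{2}{\alpha^2}\log\tfrac{3K}{\zeta}$, which is exactly the hypothesis of the lemma. Hence under the stated sample-size condition, with probability at least $1-\zeta/3$ every cluster has at least $\alpha n/2$ members, completing the proof.

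There is no real obstacle here: the argument is essentially a one-line Chernoff/Hoeffding estimate followed by a union bound, and the only minor care needed is to verify that the Hoeffding deviation $t = nw_k - \alpha n/2$ is simultaneously nonnegative and at least $n\alpha/2$ for every $k$, which follows immediately from the assumption $w_k\ge\alpha$. No further structural properties of the multinomial are required.
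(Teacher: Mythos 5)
Your proof is correct and follows essentially the same route as the paper: a Hoeffding (Chernoff) lower-tail bound on each cluster's binomial count followed by a union bound over the $K$ clusters, yielding $K\exp\left(-n\alpha^2/2\right)\leq\zeta/3$ under the stated condition on $n$. The only cosmetic difference is that you center the deviation at $n\alpha/2$ directly, whereas the paper bounds each count below $nw_k/2$ and then uses $w_k\ge\alpha$; both give the same exponent.
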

\begin{proof}
We denote the probability of the smallest cluster in $\boldsymbol{D}$ having less than $\alpha n/2$ members by $P_E$. Let $\mathcal{A}_1,\ldots,\mathcal{A}_K$ represent the following events: for $k=1,\ldots,K$, $\mathcal{A}_k$ represents the event that the $k$th cluster in $\boldsymbol{D}$ (corresponding to probability component $w_k$) has less than $nw_k/2$ members. Then, the following holds according to union bound:
\begin{equation}
P_E\leq
\sum_{k=1}^{K}\mathbb{P}\left\{\mathcal{A}_k\right\}.
\end{equation}
For $k=1,\ldots,K$, consider the binomial random variable $Y_k$ with the following distribution:
\begin{equation}
\mathbb{P}\left(Y_k\right):=
\left\{\begin{array}{lc}
w_k     &  Y_k=1
\\
1-w_k     & Y_k=0
\end{array}
\right.,
\end{equation}
with $\mathbb{E}Y_k=w_k$. Let $y_1,\ldots,y_n$ to be $n$ i.i.d. samples of $Y_k$. Define $S_k:= y_1+\cdots+y_n$, while obviously we have $\mathbb{E}S_k=nw_k$.
Using Hoeffding's inequality, one can easily verify the following chain of relations:
\begin{align}
\mathbb{P}\left\{\mathcal{A}_k\right\}=&
\mathbb{P}\left\{S_k < nw_k/2\right\}=
\mathbb{P}\left\{S_k - \mathbb{E}S_k< -nw_k/2\right\}
\nonumber\\
\leq&
\exp\left(\frac{-2n^2 w^2_k}{4\sum_{i=1}^{n}\left(\max Y_k-\min Y_k\right)^2}\right)
\nonumber\\
=& \exp\left(\frac{-nw^2_k}{2}\right).
\end{align}
Recall that we have $w_k\ge\alpha$ for all $k=1,\ldots,K$, thus one can write
\begin{equation}
P_E \leq \sum_{k=1}^{K} \exp\left(\frac{-nw^2_k}{2}\right)
\leq
K\exp\left(\frac{-n\alpha^2}{2}\right),
\end{equation}
which given the condition on $n$ in the lemma, results into $P_E\leq \zeta/3$ and completes the proof.
\end{proof}

\end{document}